\newcommand{\reals}{\mathbb{R}}
\newcommand{\Hilb}{\mathcal{H}}
\newcommand{\KL}{\operatorname{KL}}
\newcommand{\zero}{\mathbf{0}}
\newcommand{\trans}{\intercal}
\newcommand{\Proj}{\operatorname{P}}
\newcommand{\prox}{\operatorname{prox}}
\newcommand{\dom}{\operatorname{dom}}
\newcommand{\interior}{\operatorname{int}}
\newcommand{\Ord}{\mathcal{O}}
\newcommand{\diverge}{\operatorname{div}}
\renewcommand{\Pr}{\operatorname{Pr}}
\newcommand{\expect}{\operatorname{\mathbf{E}}}
\newcommand{\normal}{\mathcal{N}}
\newcommand{\bigbrace}[1]{\left\{\begin{array}{lr} #1 \end{array} \right.}
\newcommand{\otherwise}{\text{otherwise}}
\newcommand{\lr}[1]{\left( #1 \right)}
\DeclareMathOperator*{\argmin}{argmin}
\DeclareMathOperator*{\argmax}{argmax}
\newtheorem*{rep@theorem}{\rep@title}
\newcommand{\newreptheorem}[2]{%
\newenvironment{rep#1}[1]{%
 \def\rep@title{#2 \ref{##1}}%
 \begin{rep@theorem}}%
 {\end{rep@theorem}}}
\newtheorem{proposition}{Proposition}
\theoremstyle{definition}
\newcommand{\Wass}{\mathcal{W}}
\newcommand{\Xspace}{\mathcal{X}}
\newcommand{\Yspace}{\mathcal{Y}}
\newcommand{\Pspace}{\mathcal{P}}
\newcommand{\vvec}{\mathbf{v}}
\newcommand{\wvec}{\mathbf{w}}
\newcommand{\Amat}{\mathbf{A}}
\newcommand{\bvec}{\mathbf{b}}
\newcommand{\xvec}{\mathbf{x}}
\newcommand{\yvec}{\mathbf{y}}
\newcommand{\kernel}{\kappa}
\newcommand{\pdf}{\operatorname{\rho}}
\newcommand{\primal}{P}
\newcommand{\dual}{D}
\newcommand{\dualintegrand}{d}
\newcommand{\xrv}{X}
\newcommand{\yrv}{Y}
\newcommand{\Wiener}{\mathbf{W}}
\title{Approximate inference with Wasserstein \\ gradient flows}
\author{
  Charlie Frogner \\
  Brain and Cognitive Sciences\\
  Massachusetts Institute of Technology\\
  \texttt{frogner@mit.edu} \\
  \And
  Tomaso Poggio \\
  Brain and Cognitive Sciences\\
  Massachusetts Institute of Technology\\
  \texttt{tp@ai.mit.edu} \\
}
\begin{document}

\maketitle

\begin{abstract}
We present a novel approximate inference method for diffusion processes, based on the Wasserstein gradient flow formulation of the diffusion. In this formulation, the time-dependent density of the diffusion is derived as the limit of implicit Euler steps that follow the gradients of a particular free energy functional. Existing methods for computing Wasserstein gradient flows rely on discretization of the domain of the diffusion, prohibiting their application to domains in more than several dimensions. We propose instead a discretization-free inference method that computes the Wasserstein gradient flow directly in a space of continuous functions. We characterize approximation properties of the proposed method and evaluate it on a nonlinear filtering task, finding performance comparable to the state-of-the-art for filtering diffusions.
\end{abstract}

\section{Introduction}

Diffusion processes are ubiquitous in science and engineering. They arise when modeling dynamical systems driven by random fluctuations, such as action potentials in neuroscience, interest rates and asset prices in finance, reaction dynamics in chemistry, population dynamics in ecology, and in numerous other settings. In signal processing and machine learning, diffusion processes provide the dynamics underlying classic filtering methods such as the Kalman filter \cite{Kalman:1961wy}.

Inference for general diffusions is an outstanding challenge. Each diffusion process defines a probability distribution that evolves in continuous time; inference involves solving for the distribution at a future time given an initial distribution at the current time. Exact, closed-form solutions are typically unavailable, and numerous approximations have been proposed, including parametric approximations \cite{Kalman:1961wy} \cite{Julier:1995fj}, particle or sequential Monte Carlo methods \cite{Crisan:1999ub} \cite{Fearnhead:2008ui}, MCMC methods \cite{Roberts:2001vs} \cite{Golightly:2008cx} and variational approximations \cite{Archambeau:2007te} \cite{Vrettas:2015tu} \cite{Sutter:2016wi}. Each poses a different tradeoff between fidelity of the approximation and computational burden.

In this paper, we investigate a novel approximate inference method for nonlinear diffusions. It is based on a characterization, due to Jordan, Kinderlehrer and Otto \cite{Jordan:1998bf}, of the diffusion process as following a {\bf gradient flow} with respect to a Wasserstein metric on probability measures. Concretely, they define a time discretization of the diffusion process in which the approximate probability density $\rho_k$ at the $k$th timestep solves a variational problem,
\begin{equation}
\label{eq:wassflow-intro-wass-gradient-step}
\rho_k = \argmin_{\rho \in \Pspace(\Xspace)} \Wass_2^2(\rho, \rho_{k-1}) + 2 \tau f(\rho)
\end{equation}
with $\Wass_2 : \Pspace(\Xspace) \times \Pspace(\Xspace) \rightarrow \reals$ being the $2-$Wasserstein distance, $f : \Pspace(\Xspace) \rightarrow \reals$ a free energy functional defining the diffusion process, and $\tau > 0$ the size of the timestep \footnote{$\Pspace(\Xspace)$ is the space of probability measures defined on domain $\Xspace$.}. This discrete process is shown to converge, as $\tau \rightarrow 0$, to the exact diffusion process.

For reasonable values of the timestep $\tau$, the time-discretized Wasserstein gradient flow in \eqref{eq:wassflow-intro-wass-gradient-step} gives a close approximation to the density of the diffusion. In Figure \ref{fig:wassflow-intro-example-exact}, we use the method described in this paper (Sections \ref{sec:wassflow-smoothed-dual} and \ref{sec:wassflow-inference-via-stochastic-programming}) to compute the Wasserstein gradient flow for a simple diffusion, initialized with a bimodal density. We see that it follows the exact density closely.

Exact computation of the time-discretized gradient step in \eqref{eq:wassflow-intro-wass-gradient-step} is intractable in general. Existing numerical methods rely on discretization of the domain of the diffusion, which restricts their application to spaces with very few dimensions -- typically three or fewer. In this work, we propose a novel method for computing the gradient flow that avoids discretization, opting instead to operate directly on continuous functions lying in a reproducing kernel Hilbert space. Specifically, we derive a dual problem to \eqref{eq:wassflow-intro-wass-gradient-step} that uses a regularized Wasserstein distance in place of the unregularized one in \eqref{eq:wassflow-intro-wass-gradient-step}. We show that, for a general strictly convex, smooth regularizer, this dual problem is an unconstrained stochastic program, which admits a tractable finite-dimensional RKHS approximation. This approach is motivated by a similar observation for the case of entropic regularization of optimal transport in \cite{Genevay:2016stochastic}. Our proposed approximation yields an approximate inference method for diffusions that is computationally tractable in settings where domain discretization is impractical.

The rest of this paper is organized as follows. In Section 2 we review diffusion processes and discuss related work. In Section 3 we derive a smoothed dual formulation of the Wasserstein gradient flow, and in Section 4 we use this dual formulation to derive a novel inference algorithm. In Section 5 we investigate theoretical properties. In Section 6 we characterize empirical performance of the proposed algorithm, before concluding.

\begin{figure}[t]
\centering
\begin{subfigure}[t]{0.32\textwidth}
  \centering
  \includegraphics[width=\textwidth]{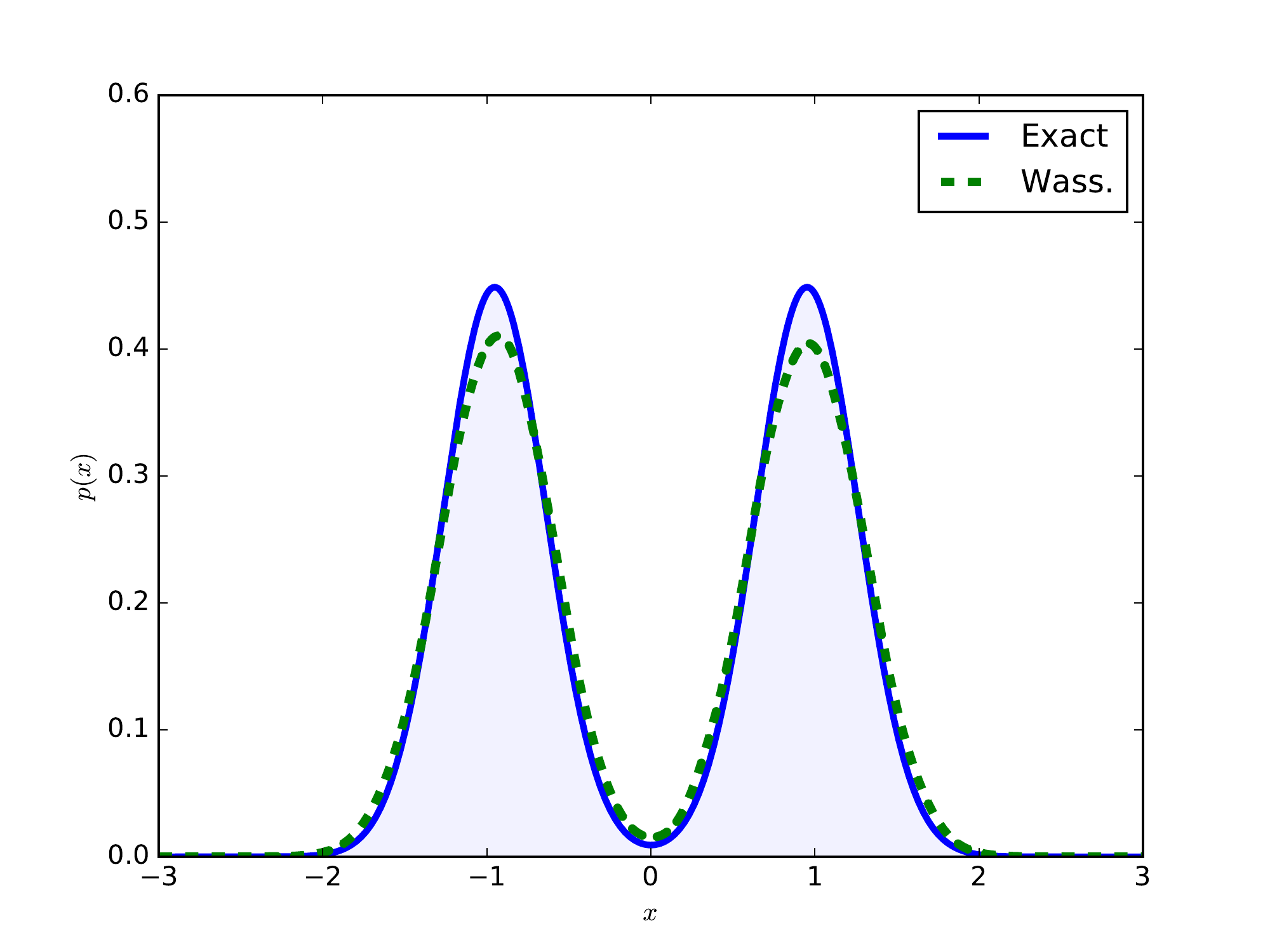}
  \caption{$t = 0.05$.}
  \label{fig:wassflow-intro-example-exact-density-0}
\end{subfigure}
\begin{subfigure}[t]{0.32\textwidth}
  \centering
  \includegraphics[width=\textwidth]{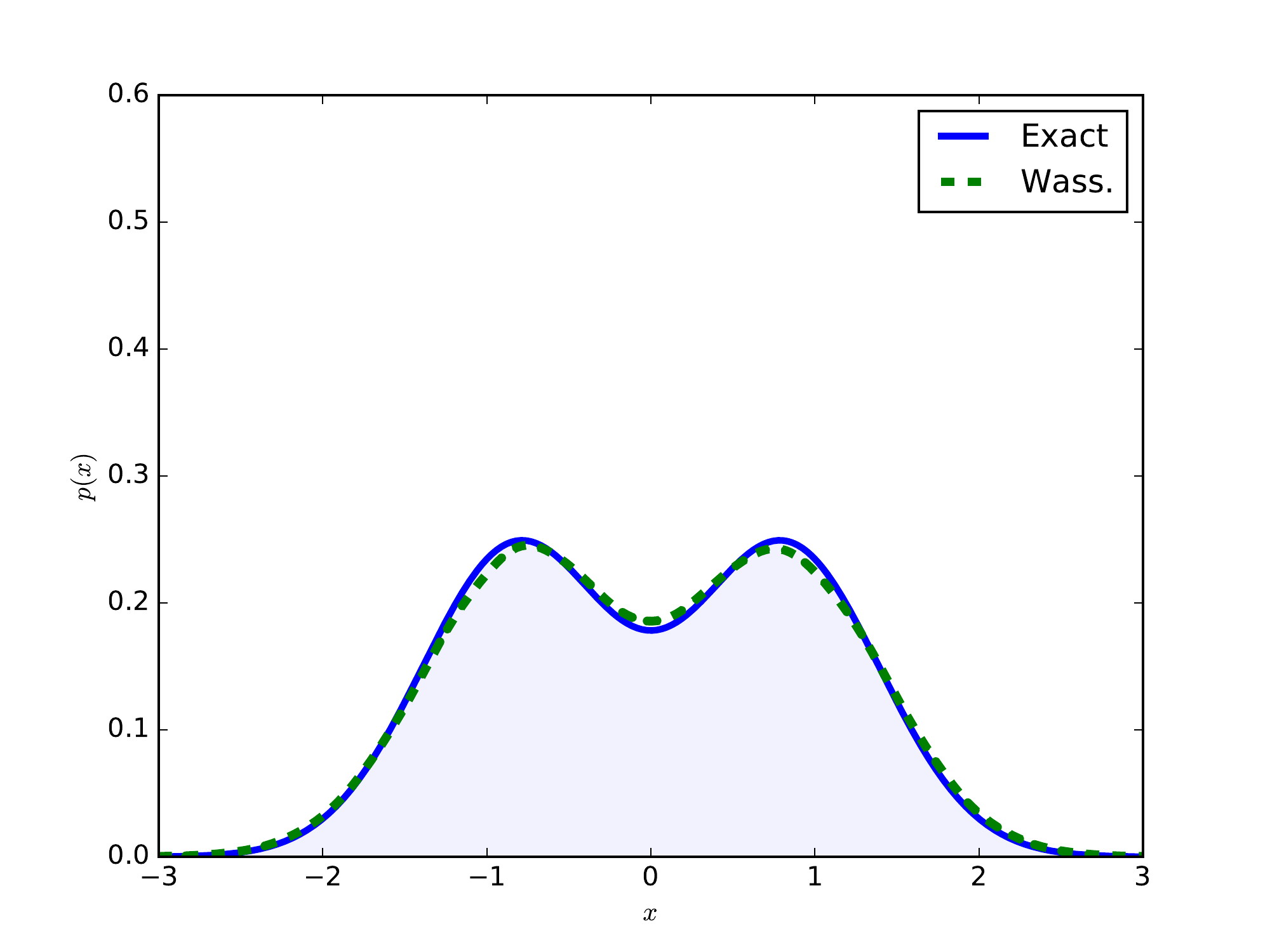}
  \caption{$t = 0.2$}
  \label{fig:wassflow-intro-example-exact-density-1}
\end{subfigure}
\begin{subfigure}[t]{0.32\textwidth}
  \centering
  \includegraphics[width=\textwidth]{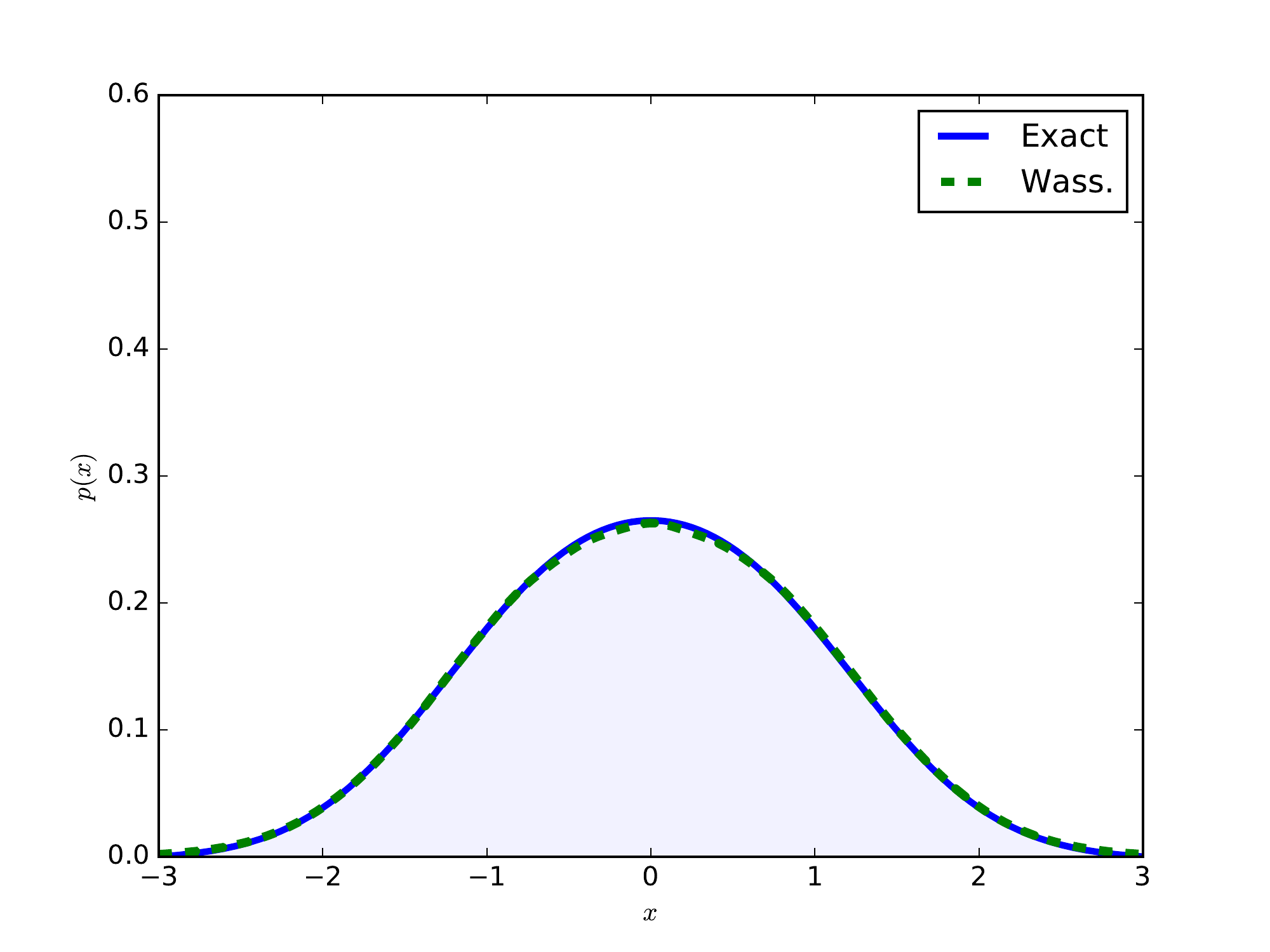}
  \caption{$t = 0.5$.}
  \label{fig:wassflow-intro-example-exact-density-2}
\end{subfigure}
\caption{Regularized Wasserstein gradient flow (Section \ref{sec:wassflow-smoothed-dual}) approximates closely an Ornstein-Uhlenbeck diffusion, initialized with a bimodal density. Both the regularization ($\gamma$) and the discrete timestep ($\tau$) are sources of error. Shaded region is the true density.}
\label{fig:wassflow-intro-example-exact}
\end{figure}

\section{Background and related work}

\subsection{Notation}

$\Xspace$ is a smooth manifold. $\mathcal{M}_+(\Xspace)$ is the set of nonnegative Radon measures on $\Xspace$ and $\Pspace(\Xspace)$ is the set of probability measures on $\Xspace$, $\Pspace(\Xspace) = \{ \mu \in \mathcal{M}_+(\Xspace) | \mu(\Xspace) = 1 \}$. Given a joint probability measure $\pi$ on the product space $\Xspace \times \Xspace$, its marginals are the measures $\Proj_1 \pi \in \Pspace(\Xspace)$ and $\Proj_2 \pi \in \Pspace(\Xspace)$ defined by
$$ (\Proj_1 \pi)(A) = \pi(A \times \Yspace), \quad (\Proj_2 \pi)(B) = \pi(\Xspace \times B), $$
for $A \subseteq \Xspace$ and $B \subseteq \Yspace$ measurable subsets of $\Xspace$ and $\Yspace$. $\reals_+$ is the set of nonnegative reals, while $\reals_{++}$ are positive reals.

\subsection{Diffusions, free energy, and the Fokker-Planck equation}
\label{sec:bg-diffusions}

We consider a continuous-time stochastic process $\xrv_t$ taking values in a smooth manifold $\Xspace$, for $t \in [t_i, t_f]$, and having single-time marginal densities $\pdf_t : \Xspace \rightarrow \reals$ with respect to a reference measure on $\Xspace$. We are specifically interested in diffusion processes whose single-time marginal densities obey a diffusive partial differential equation,
\begin{equation}
\label{eq:wassflow-bg-diffusive-PDE}
\frac{\partial \pdf_t}{\partial t} = \diverge \left[\pdf_t \nabla f^{\prime}(\pdf_t)\right],
\end{equation}
with $f : \Pspace(\Xspace) \rightarrow \reals$ a functional on densities and $f^{\prime}$ its gradient for the $L^2(\Xspace)$ metric.

$f$ is the \emph{free energy} and defines the diffusion entirely. An important example, which will be our primary focus, is the \emph{advection-diffusion process}, which is typically characterized as obeying an It\^o stochastic differential equation,
\begin{equation}
\label{eq:wassflow-bg-advection-diffusion-sde}
d\xrv_t = -\nabla w(\xrv_t) dt + \beta^{-1/2} d\Wiener_t
\end{equation}
with $\nabla w$ being the gradient of a potential function $w : \Xspace \rightarrow \reals$, determining the advection or drift of the system, and $\beta^{-1/2} > 0$ the magnitude of the diffusion, which is driven by a Wiener process having stochastic increments $d\Wiener_t$ (see \cite{Kloeden:2013vt} for a formal introduction) \footnote{We assume sufficient conditions for existence of a strong solution to \eqref{eq:wassflow-bg-advection-diffusion-sde} are fulfilled \cite{Oksendal:2013ug} Thm. 5.2.1.}. The advection-diffusion has marginal densities obeying a \emph{Fokker-Planck} equation,
\begin{equation}
\label{eq:wassflow-bg-advection-diffusion-kolmogorov-forward}
\frac{\partial \pdf_t}{\partial t} = \beta^{-1} \Delta \pdf_t + \diverge[\pdf_t \nabla w],
\end{equation}
which is a diffusive PDE with free energy functional $f(\pdf) = \langle w, \pdf \rangle_{L^2(\Xspace)} + \beta^{-1} \langle \pdf, \log \pdf \rangle_{L^2(\Xspace)}$, for scalar potential $w \in L^2(\Xspace)$.
The advection-diffusion is \emph{linear} whenever $\nabla w$ is linear in its argument.

We note that the current work applies to those diffusions that can be rendered into the form \eqref{eq:wassflow-bg-diffusive-PDE} via a change of variables. In particular, in the case of advection-diffusion, these are the \emph{reducible} diffusions and include nearly all diffusions in one dimension \cite{AitSahalia:2008jb}.

\subsection{Approximate inference for diffusions}

Inference for a nonlinear diffusion is generally intractable. Given an initial density at time $t_i$, the goal is to determine the single-time marginal density $\pdf_t$ at some time $t > t_i$. Exact inference entails solving the foward PDE \eqref{eq:wassflow-bg-diffusive-PDE}, for which closed-form solutions are seldom available.

{\bf Domain discretization.} In certain cases, an Eulerian discretization of the domain, i.e. a fixed mesh, is available. Here one can apply standard numerical integration methods such as Chang and Cooper's \cite{Chang:1970wj} or entropic averaging \cite{Pareschi:2017structure} for integrating the Fokker-Planck PDE. A number of Eulerian methods have been proposed for Wasserstein gradient flows, as well, including finite element \cite{burger2009mixed} and finite volume methods \cite{carrillo2015finite}. Entropic regularization of the problem yields an efficient iterative method \cite{Peyre:2015jc}. Lagrangian discretizations, which follow moving particles or meshes, have also been explored \cite{carrillo2009numerical} \cite{westdickenberg2010variational} \cite{budd2013monge} \cite{benamou2016discretization}.

{\bf Particle simulation.} One approach to inference approximates the target density by a weighted sum of delta functions, $\rho_t(\xvec) = \sum_{i=1}^N \wvec_i \delta_{\xvec_t^{(i)} = \xvec}$, at locations $\xvec_t^{(i)} \in \Xspace$. Each delta function represents a ``particle,'' and can be obtained by sampling an initial location $\xvec_{t_i}$ according to $\rho_{t_i}$, then forward simulating a trajectory from that location, according to the diffusion. Standard simulation methods such as Euler-Maruyama discretize the time interval $[t_i, t]$ and update the particle's location recursively \cite{Kloeden:2013vt}. For a fixed time discretization, such methods are biased in the sense that, with increasing number of particles, they converge only to an approximation of the true predictive density. To address this, one can use a rejection sampling method \cite{Beskos:2005exact} \cite{Beskos:2008vt} to sample exactly (with no bias) from the distribution over trajectories. Density estimation can be used to extrapolate the inferred density beyond the particle locations \cite{Durham:2002ws} \cite{Hurn:2003it}.

{\bf Parametric approximations.} One can also approximate the predictive density by a member of a parametric class of distributions. This parametric density might be chosen by matching moments or another criterion. The extended Kalman filter \cite{Kalman:1961wy} \cite{Kushner:1967du}, for example, chooses a Gaussian density whose mean and covariance evolve according to a first order Taylor approximation of the dynamics. Sigma point methods such as the unscented Kalman filter \cite{Julier:1995fj} \cite{Julier:2000cb} \cite{Sarkka:2007td} select a deterministic set of points $\xvec_t^{(i)} \in \Xspace$ that evolve according to the exact dynamics of the process, such that the mean and covariance of the true predictive density is well-approximated by finite sums involving only these points. The mean and covariance so computed then define a Gaussian approximation. Gauss-Hermite \cite{Singer:2008ur}, Gaussian quadrature and cubature methods \cite{Sarkka:2012continuous} \cite{Sarkka:2013uo} correspond to different mechanisms for choosing the sigma points $\xvec_t^{(i)}$.

Beyond Gaussian approximations, mixtures of Gaussians have been used as well to approximate the predictive density \cite{Alspach:1972to} \cite{Terejanu:2008tc} \cite{Terejanu:2011ww}. Variational methods attempt to minimize a divergence between the chosen approximate density and the true predictive density. These can include Gaussian approximations \cite{Archambeau:2007te} \cite{AlaLuhtala:2015vl} as well as more general exponential families and mixtures \cite{Vrettas:2015tu} \cite{Sutter:2016wi}. And for a broad class of diffusions, closed-form series expansions are available \cite{AitSahalia:2008jb}.

\section{Smoothed dual formulation for Wasserstein gradient flow}
\label{sec:wassflow-smoothed-dual}

Our target is the predictive distribution of a diffusion: given an initial density $\pdf_t$, we want to evolve it forward by a time increment $\Delta t$, to obtain the solution for the diffusion \eqref{eq:wassflow-bg-diffusive-PDE} at time $t + \Delta t$. We propose to approximate this by $m$ steps of the Wasserstein gradient flow \eqref{eq:wassflow-intro-wass-gradient-step}, with stepsize $\tau = \Delta t / m$. The problem is to compute approximately this gradient step.

\subsection{Regularized Wasserstein gradient flow}

We start by introducing a proximal operator for the gradient step, which uses a regularized Wasserstein distance. For measures $\mu, \nu \in \Pspace(\Xspace)$, we define the squared, regularized $2-$Wasserstein distance as
\begin{equation}
\label{eq:wassflow-inference-wasserstein-regularized}
\Wass_{\gamma}^2(\mu, \nu) = \min_{\pi \in \Pi(\mu, \nu)} \int_{\Xspace \times \Xspace} d^2(\xvec, \yvec) d\pi(\xvec, \yvec) + \gamma R(\pi).
\end{equation}
with $d : \Xspace \times \Xspace \rightarrow [0, +\infty)$ the distance in $\Xspace$, $\Pi(\mu, \nu)$ the set of joint measures on $\Xspace \times \Xspace$ having marginals $\mu$ and $\nu$, and $R : \Pspace(\Xspace \times \Xspace) \rightarrow \reals$ a regularizer. We assume $R$ is Legendre-type (Bauschke and Borwein Def. 2.8 \cite{Bauschke:1997legendre}), implying it is closed, strictly convex, smooth, and proper. We also assume $R$ is separable, in the sense that
\begin{equation}
R(\pi) = \int_{\Xspace \times \Xspace} \bar{R}(d\pi(\xvec, \yvec)),
\end{equation}
for $\bar{R} : \reals \rightarrow \reals$ the component function. In the case of an entropy regularizer, for example, this is $\bar{R}: u \mapsto u (\log u - 1)$. For an $L^2$ regularizer, this is $\bar{R} : u \mapsto u^2$.

Given a free energy functional $f$ (Section \ref{sec:bg-diffusions}), we define the primal objective $\primal_{\nu}^{\gamma, \tau} : \Pspace(\Xspace) \rightarrow [0, +\infty)$,
\begin{equation}
\label{eq:wassflow-inference-primal-objective}
\primal_{\nu}^{\gamma, \tau}(\mu) \triangleq \Wass_{\gamma}^2(\mu, \nu) + 2 \tau f(\mu),
\end{equation}
for $\gamma \geq 0$, and $\tau > 0$. The primal formulation for the regularized Wasserstein gradient flow is 
\begin{equation}
\label{eq:wassflow-inference-proximal-operator-smoothed}
\prox_{\tau f}^{\Wass_{\gamma}} \nu = \argmin_{\mu \in \Pspace(\Xspace)} \primal_{\nu}^{\gamma, \tau}(\mu).
\end{equation}
For $\gamma > 0$, the map $\mu \mapsto \Wass_{\gamma}(\mu, \nu)$ is strictly convex and coercive such that, assuming a convex functional $f$ in \eqref{eq:wassflow-inference-primal-objective}, the proximal operator is uniquely defined. 

\begin{wrapfigure}[9]{R}{0.49\textwidth}
\begin{minipage}{0.49\textwidth}
\caption{Free energy expressions for advection-diffusion}
\label{tbl:wassflow-diffusion-free-energy-expressions}
\centering
  \begin{tabular}{l}
    \toprule
    $f(\mu) = \langle w, d\mu \rangle + \beta^{-1} \langle d\mu, \log d\mu - 1\rangle$ \\
    $f^{\ast}(z) = \beta^{-1} \int_{\Xspace} \exp\lr{\beta (z(\xvec) - w(\xvec))}$ \\
    $\lr{\nabla f^{\ast}(z)}(\xvec) = \exp\lr{\beta (z(\xvec) - w(\xvec))}$ \\
    $\lr{\nabla^2 f^{\ast}(z)}(\xvec) = \beta \exp\lr{\beta (z(\xvec) - w(\xvec))}$ \\
    \bottomrule
  \end{tabular}
\end{minipage}
\end{wrapfigure}

Note that we give all formulas in terms of a general free energy $f$. Table \ref{tbl:wassflow-diffusion-free-energy-expressions} gives concrete expressions for the free energy and its conjugate, in the case of an advection-diffusion system.

\subsection{Smoothed dual formulation}

Computing the proximal operator \eqref{eq:wassflow-inference-proximal-operator-smoothed} directly entails solving an infinite program over the set of possible joint measures $\pi \in \Pspace(\Xspace \times \Xspace)$ having $\nu$ as the second marginal. As a step towards a tractable approximation, we will derive a dual formulation that is unconstrained.

The dual objective $\dual_{\nu}^{\gamma, \tau} : L^2(\Xspace) \times L^2(\Xspace) \rightarrow \reals$ is
\begin{equation}
\label{eq:wassflow-inference-dual-objective}
\dual_{\nu}^{\gamma, \tau}(g, h) \triangleq -\tau f^{\ast}\lr{-\frac{1}{\tau} g} + \langle h, d\nu \rangle_{L^2(\Xspace)} - \gamma R^{\ast}\lr{\max\left\{\frac{1}{\gamma}\lr{g + h - d^2}, \nabla R(\zero)\right\}},
\end{equation}
with $f^{\ast}$ and $R^{\ast}$ the convex conjugates \footnote{$f^{\ast}(z) = \sup_{\mu} \langle \mu, z \rangle_{L^2(\Xspace)} - f(\mu)$, $R^{\ast}(\xi) = \sup_{\pi} \langle \pi, \xi \rangle_{L^2(\Xspace \times \Xspace)} - R(\pi)$.}.
We have the following.

\begin{proposition}[Strong duality]
\label{prop:wassflow-inference-fenchel-dual}
Let $\nu \in \Pspace(\Xspace)$ and $f : \Pspace(\Xspace) \rightarrow [0, +\infty)$ a convex, lower semicontinuous and proper functional. Define $\primal_{\nu}^{\gamma, \tau}$ as in \eqref{eq:wassflow-inference-primal-objective} and $\dual_{\nu}^{\gamma, \tau}$ as in \eqref{eq:wassflow-inference-dual-objective}. Assume $\gamma > 0$. Then
\begin{equation}
\label{eq:wassflow-inference-fenchel-dual}
\min_{\mu \in \Pspace(\Xspace)} \primal_{\nu}^{\gamma, \tau}(\mu) = \max_{g \in L^2(\Xspace), h \in L^2(\Xspace)} \dual_{\nu}^{\gamma, \tau}(g, h).
\end{equation}
Suppose $f$ is strictly convex and let $g_{\ast}, h_{\ast}$ maximize $\dual_{\nu}^{\gamma, \tau}$. Then
\begin{equation}
\label{eq:wassflow-inference-fenchel-duality-solution-relation}
\mu_{\ast} = \nabla f^{\ast}(-\frac{1}{\tau} g_{\ast})
\end{equation}
minimizes $\primal_{\nu}^{\gamma, \tau}$.
\end{proposition}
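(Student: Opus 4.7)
The strategy is Fenchel--Rockafellar duality applied to a joint reformulation of the primal, followed by reading off \eqref{eq:wassflow-inference-fenchel-duality-solution-relation} from the stationarity condition.

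First I would unfold $\Wass_\gamma^2$ so that the primal becomes a joint program over $(\mu,\pi)$,
\begin{equation*}
\min_{\mu,\pi}\;\int_{\Xspace\times\Xspace} d^2(\xvec,\yvec)\,d\pi(\xvec,\yvec)+\gamma R(\pi)+2\tau f(\mu)\quad\text{s.t.}\quad\Proj_1\pi=\mu,\;\Proj_2\pi=\nu.
\end{equation*}
Because $R$ is Legendre-type with separable component $\bar R$ finite only on $\reals_+$, nonnegativity of $\pi$ is enforced implicitly by $R$.

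Next I would introduce $L^2$ multipliers $g,h$ for the two marginal constraints and form the Lagrangian
\begin{equation*}
L(\mu,\pi,g,h)=\int\lr{d^2(\xvec,\yvec)+g(\xvec)+h(\yvec)}d\pi+\gamma R(\pi)+2\tau f(\mu)-\langle g,\mu\rangle-\langle h,\nu\rangle.
\end{equation*}
Swapping $\min$ and $\max$ and performing the two inner infima separately yields a $-(2\tau f)^\ast$ term from $\mu$ and a $-\gamma R^\ast$ term from $\pi$. Since $R$ is $+\infty$ off the nonnegative cone, the $\pi$-infimum produces the one-sided conjugate of $\bar R$, which pointwise equals $\bar R^\ast$ evaluated at $\max\{\,\cdot\,,\nabla R(\zero)\}$; this is precisely the clipping that appears inside $R^\ast$ in \eqref{eq:wassflow-inference-dual-objective}. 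After an appropriate sign change and rescaling of $g,h$ (using $(c f)^\ast(z)=c f^\ast(z/c)$), one recovers $\dual_\nu^{\gamma,\tau}$.

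The core step is justifying strong duality, i.e.\ no gap in the min--max exchange. Since $\gamma>0$ and $R$ is Legendre-type, $\mu\mapsto\Wass_\gamma^2(\mu,\nu)$ is strictly convex and coercive, so $\primal_\nu^{\gamma,\tau}$ is strictly convex and attained at a unique $\mu_\ast$. A Slater-type qualification is met at the product measure $\mu_0\otimes\nu$ for any $\mu_0$ in the relative interior of the effective domain of $f$: separability of $R$ places this coupling in the smooth interior of $\operatorname{dom} R$, so the linear marginal constraints are qualified and Fenchel--Rockafellar applies, yielding \eqref{eq:wassflow-inference-fenchel-dual} together with existence of maximizers $(g_\ast,h_\ast)$. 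The solution relation \eqref{eq:wassflow-inference-fenchel-duality-solution-relation} then follows from $\partial_\mu L=0$ at the optimum, which in the relevant sign/scaling convention reads $\nabla f(\mu_\ast)=-g_\ast/\tau$; strict convexity of $f$ makes $\nabla f$ a bijection onto $\operatorname{dom}\nabla f^\ast$, and inverting via Legendre duality gives $\mu_\ast=\nabla f^\ast(-g_\ast/\tau)$.

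The main obstacle I anticipate is verifying the constraint qualification rigorously when $\Xspace$ is noncompact: one must exhibit an admissible Slater point in $L^2(\Xspace)$ and control the dual iterates at infinity so that optimal $(g_\ast,h_\ast)$ actually lie in $L^2$. A standard remedy is to approximate by compactly supported problems and pass to the limit, using the joint lower semicontinuity of $R$, $f$, and $\Wass_\gamma^2$ together with the coercivity provided by $\gamma>0$ to extract a convergent subsequence of primal--dual optima.
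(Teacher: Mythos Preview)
Your proposal is correct and follows essentially the same route as the paper---Fenchel/Lagrangian duality followed by Legendre inversion for the primal--dual relation---but the structuring differs in two respects worth noting. First, the paper dualizes in two stages: it applies Fenchel duality in $\mu$ alone, obtaining $\max_g\,-\Wass_\gamma(\cdot,\nu)^\ast(g)-\tau f^\ast(-g/\tau)$, and only then introduces $h$ (and an explicit nonnegativity multiplier $\varepsilon$) via a Lagrangian for the inner problem defining $\Wass_\gamma(\cdot,\nu)^\ast$. You instead lift everything to a single joint Lagrangian in $(\mu,\pi)$ with both multipliers at once; this is equivalent but slightly cleaner. Second, the paper derives the $\max\{\cdot,\nabla R(\zero)\}$ clipping by writing out the KKT system with the explicit slack $\varepsilon\ge 0$ and using injectivity of $\nabla\bar R^\ast$ to pin down $\varepsilon$ pointwise, whereas you obtain it by treating $\bar R$ as $+\infty$ off $\reals_+$ and taking a one-sided conjugate. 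Both yield the same formula; the paper's KKT argument is more mechanical, yours is more conceptual. Finally, you are more careful than the paper about constraint qualification: the paper simply invokes Fenchel duality under the stated convexity/l.s.c./properness hypotheses and does not discuss Slater points or the noncompact case at all, so your concern there is a genuine addition rather than an omission relative to the original.
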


Importantly, we have replaced the linearly-constrained optimization in the primal \eqref{eq:wassflow-inference-proximal-operator-smoothed} with an unconstrained problem \eqref{eq:wassflow-inference-fenchel-dual}.

\section{Inference via stochastic programming}
\label{sec:wassflow-inference-via-stochastic-programming}

\subsection{Stochastic programming formulation}

The unconstrained dual problem \eqref{eq:wassflow-inference-dual-objective} is not directly computable in general. To construct an approximation, we start by noting that the dual has an interpretation as a stochastic program. Specifically, let $\mu_0, \nu_0 \in \Pspace(\Xspace)$ be arbitrarily chosen probability measures, supported everywhere in $\Xspace$. We can express the dual objective \eqref{eq:wassflow-inference-dual-objective} as 
\begin{equation}
\label{eq:wassflow-continuous-expectation-max-dual}
\dual_{\nu}^{\gamma, \tau}(g, h) = \expect_{\xrv, \yrv} \dualintegrand_{\nu}^{\gamma, \tau}(\xrv, \yrv, g, h)
\end{equation}
for random variables $\xrv, \yrv$ distributed as $\mu_0$ and $\nu_0$, respectively, where the integrand $\dualintegrand_{\nu}^{\gamma, \tau}$ is
\begin{align}
\begin{split}
\label{eq:wassflow-continuous-expectation-max-dual-integrand}
\dualintegrand_{\nu}^{\gamma, \tau}(\xvec, \yvec, g, h) &= {-\tau} \frac{\bar{f}^{\ast}(-\frac{1}{\tau} g(\xvec))}{\mu_0(\xvec)} + h(\yvec) \frac{\nu(\yvec)}{\nu_0(\yvec)} \\
&\quad\quad - \frac{\gamma}{\mu_0(\xvec) \nu_0(\yvec)} \bar{R}^{\ast}\lr{\max\left\{\frac{1}{\gamma} (g(\xvec) + h(\yvec) - d^2(\xvec, \yvec)), \nabla R(\zero)(\xvec, \yvec)\right\}}.
\end{split}
\end{align}

Here, the terms $\bar{f}^{\ast}$ and $\bar{R}^{\ast}$ arise when we express the conjugate functionals $f^{\ast}$ and $R^{\ast}$ in integral form,
\begin{align*}
f^{\ast}(z) &= \int_{\Xspace} \bar{f}^{\ast}(z(\xvec)), \quad\quad R^{\ast}(\xi) = \int_{\Xspace \times \Xspace} \bar{R}^{\ast}(\xi(\xvec, \yvec)).
\end{align*}
In the case of an advection-diffusion, for example, the former is
\begin{align*}
\bar{f}^{\ast}(z(\xvec)) &= \beta^{-1} \exp\lr{\beta (z(\xvec) - w(\xvec))}
\end{align*}
for $w : \Xspace \rightarrow [0, +\infty)$ the advection potential.

\subsection{Monte Carlo approximation}

The stochastic programming formulation \eqref{eq:wassflow-continuous-expectation-max-dual} suggests a Monte Carlo approximation. If we sample $N$ pairs $(\xvec^{(i)}, \yvec^{(i)}) \in \Xspace \times \Xspace$ independently according to $\mu_0 \otimes \nu_0$, we can approximate $\dual_{\nu}^{\gamma, \tau}$ by the empirical mean,
\begin{equation}
\label{eq:wassflow-continuous-dual-empirical-mean-objective}
\dual_{\nu, N}^{\gamma, \tau}(g, h) = \frac{1}{N} \sum_{i=1}^N \dualintegrand_{\nu}^{\gamma, \tau}(\xvec^{(i)}, \yvec^{(i)}, g, h).
\end{equation}
This converges to $\dual_{\nu}^{\gamma, \tau}(g, h)$ in the limit of large $N$. 

The measure $\mu_0 \otimes \nu_0$ functions similarly to the importance distribution in importance sampling. Here, we expect a low variance approximation requires $\mu_0 \otimes \nu_0$ to be similar to $\mu_{\ast} \otimes \nu$, with $\mu_{\ast}$ the exact primal solution for the gradient step. In practice, it suffices to choose a hypercube containing the effective support of $\mu_{\ast} \otimes \nu$, and sample uniformly. This effective support can be determined by a Gaussian approximation to the process, such as underlies the extended or unscented Kalman filter.

\subsection{RKHS approximation}
\label{sec:wassflow-continuous-approx-representation}

There is one more step to obtain a tractable problem: we need to restrict the domain of the dual, to ensure a finite-dimensional solution. We choose a domain $\mathcal{G} \times \mathcal{G}$, with $\mathcal{G}$ a compact, convex subset of a reproducing kernel Hilbert space (RKHS) $\Hilb$ defined on $\Xspace$. From a practical standpoint, this encompasses two settings: the first is the case in which we choose a finite set of basis functions $\{\phi_k\}_{k=1}^p \subset L^2(\Xspace)$ and let $\mathcal{G}$ be contained in their linear span; the second is the case in which we choose a reproducing kernel $\kernel : \Xspace \times \Xspace \rightarrow \reals$ associated to an RKHS $\Hilb$ and assume $\mathcal{G} \subset \Hilb$. In the second case, the fact of a finite-dimensional representation arises from a representer theorem (Proposition \ref{prop:wassflow-continuous-rkhs-representation}). In either case we assume the coefficients are restricted to a compact, convex set.

\begin{proposition}[Representation for general RKHS]
\label{prop:wassflow-continuous-rkhs-representation}
Let $\nu \in \Pspace(\Xspace)$ and $\gamma, \tau, N > 0$. Let $\{(\xvec^{(i)}, \yvec^{(i)}\}_{i=1}^N \subset \Xspace \times \Xspace$. Then there exist $g_{\ast}, h_{\ast} \in \Hilb$ maximizing \eqref{eq:wassflow-continuous-dual-empirical-mean-objective} such that
$$ (g_{\ast}, h_{\ast}) = \sum_{i=1}^N \lr{\alpha_g^{(i)} \kernel(\xvec^{(i)}, \cdot), \alpha_h^{(i)} \kernel(\yvec^{(i)}, \cdot)}, $$
for some sequences of scalar coefficients $\{\alpha_g^{(i)}\}_{i=1}^N$ and $\{\alpha_h^{(i)}\}_{i=1}^N$, with $\kernel : \Xspace \times \Xspace \rightarrow \reals$ the reproducing kernel for $\Hilb$.
\end{proposition}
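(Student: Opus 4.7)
The plan is to invoke the classical representer-theorem argument, observing that the empirical dual objective depends on $g$ and $h$ only through their pointwise evaluations at the sample locations. Because a maximizer is assumed to exist in $\Hilb$, we need only exhibit one lying in the requisite finite-dimensional span.

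First, I would inspect the integrand $\dualintegrand_\nu^{\gamma,\tau}(\xvec^{(i)}, \yvec^{(i)}, g, h)$ in \eqref{eq:wassflow-continuous-expectation-max-dual-integrand} and observe that $g$ enters only through the scalars $g(\xvec^{(i)})$ (appearing both inside $\bar f^\ast$ and inside the $\max\{\cdot,\cdot\}$ argument of the $\bar R^\ast$ term), and symmetrically $h$ enters only through $h(\yvec^{(i)})$. Consequently $\dual_{\nu,N}^{\gamma,\tau}(g,h)$ is a function of the two $N$-tuples $(g(\xvec^{(i)}))_{i=1}^N$ and $(h(\yvec^{(i)}))_{i=1}^N$ alone.

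Next, I would define the two closed subspaces $V_g = \mathrm{span}\{\kernel(\xvec^{(i)},\cdot)\}_{i=1}^N$ and $V_h = \mathrm{span}\{\kernel(\yvec^{(i)},\cdot)\}_{i=1}^N$ of $\Hilb$, and orthogonally decompose any $g = g_\parallel + g_\perp$ with $g_\parallel \in V_g$ and $g_\perp \in V_g^\perp$, and similarly $h = h_\parallel + h_\perp$ with $h_\parallel \in V_h$. By the reproducing property,
\begin{equation*}
g(\xvec^{(i)}) = \langle g, \kernel(\xvec^{(i)}, \cdot)\rangle_\Hilb = \langle g_\parallel, \kernel(\xvec^{(i)}, \cdot)\rangle_\Hilb = g_\parallel(\xvec^{(i)}),
\end{equation*}
since $\kernel(\xvec^{(i)},\cdot) \in V_g$ is orthogonal to $g_\perp$; analogously $h(\yvec^{(i)}) = h_\parallel(\yvec^{(i)})$. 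Combining with the previous step, $\dual_{\nu,N}^{\gamma,\tau}(g,h) = \dual_{\nu,N}^{\gamma,\tau}(g_\parallel, h_\parallel)$.

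Now let $(g_\ast^0, h_\ast^0)$ be any maximizer of \eqref{eq:wassflow-continuous-dual-empirical-mean-objective} in $\Hilb \times \Hilb$ (its existence is taken for granted in the proposition statement; in the paper's setting it follows from compactness of the coefficient domain in the corresponding finite-dimensional problem). Setting $g_\ast = (g_\ast^0)_\parallel \in V_g$ and $h_\ast = (h_\ast^0)_\parallel \in V_h$, the previous identity gives $\dual_{\nu,N}^{\gamma,\tau}(g_\ast, h_\ast) = \dual_{\nu,N}^{\gamma,\tau}(g_\ast^0, h_\ast^0)$, so $(g_\ast, h_\ast)$ is again a maximizer, and by construction it has the claimed form $g_\ast = \sum_i \alpha_g^{(i)} \kernel(\xvec^{(i)},\cdot)$, $h_\ast = \sum_i \alpha_h^{(i)} \kernel(\yvec^{(i)},\cdot)$.

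The proof is essentially a bookkeeping exercise; there is no deep obstacle. The only mildly delicate point is verifying the dependence claim for the term involving $\max\{\gamma^{-1}(g(\xvec)+h(\yvec)-d^2(\xvec,\yvec)),\nabla R(\mathbf{0})(\xvec,\yvec)\}$, since the $\nabla R(\mathbf{0})$ term does not depend on $g$ or $h$ at all, so the max is still a function solely of the pointwise values $g(\xvec^{(i)}),h(\yvec^{(i)})$ at the sample points, and the representer decomposition goes through exactly as above.
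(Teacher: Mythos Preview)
Your proposal is correct and mirrors the paper's proof almost exactly: define the span of the kernel sections at the sample points, orthogonally decompose, use the reproducing property to show the objective depends only on the parallel component, and project an arbitrary maximizer. The paper treats $g$ first and then says ``the same argument holds for $h_\ast$,'' whereas you handle both simultaneously with separate spans $V_g$ and $V_h$, but this is a purely cosmetic difference.
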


\subsection{Optimization}

The Monte Carlo stochastic program can be solved by a standard iterative methods for convex optimization. Algorithm \ref{alg:wassflow-continuous-stochastic-approx} outlines the resulting inference method. Note that conditioning of the problem depends on the regularization parameter $\gamma$, which presents a tradeoff between accuracy of the Wasserstein approximation (smaller $\gamma$) and fast optimization (larger).

\begin{algorithm}[t]
\caption{Stochastic program approximating Wasserstein gradient flow}
\begin{algorithmic}
\label{alg:wassflow-continuous-stochastic-approx}
\STATE {\bf Given}: initial density $\rho_t$, constant $\gamma > 0$, timestep $\tau > 0$.
\STATE {\bf Choose} sampling densities $\mu_0, \nu_0$ on $\Xspace$.
\STATE {\bf Sample} independently $N$ pairs $(\xvec_i, \yvec_i) \sim \mu_0 \otimes \nu_0.$
\STATE {\bf Solve} $g_{\ast}, h_{\ast} = \argmax_{g, h \in \mathcal{G}} \dual_{\rho_t, N}^{\gamma, \tau}(g, h).$
\STATE The evolved density is $\rho_{t + \tau} = \nabla f^{\ast}\lr{-\frac{1}{\tau} g_{\ast}}$.
\end{algorithmic}
\end{algorithm}

\section{Properties}

\subsection{Consistency}

The Monte Carlo stochastic program \eqref{eq:wassflow-continuous-dual-empirical-mean-objective} yields a consistent approximation to the regularized Wasserstein gradient step \eqref{eq:wassflow-inference-proximal-operator-smoothed}, in the sense that, as we increase the number of samples, the solution converges to that of the original dual program \eqref{eq:wassflow-continuous-expectation-max-dual}. This holds under a set of assumptions including compactness of $\Xspace \times \Xspace$ and conditions on $\mu_0, \nu_0$ and $\mathcal{G}$ (Appendix \ref{sec:appendix-consistency}). The assumptions guarantee that the stochastic dual objective \eqref{eq:wassflow-continuous-dual-empirical-mean-objective} is $L$-Lipschitz. Under the assumptions, we get uniform convergence of the Monte Carlo dual objective \eqref{eq:wassflow-continuous-dual-empirical-mean-objective} to its expectation \eqref{eq:wassflow-continuous-expectation-max-dual}, and this suffices to guarantee consistency.

\begin{proposition}[Consistency of stochastic program]
\label{prop:wassflow-consistency-unregularized}
Let $\dual_{\nu}^{\gamma, \tau}$ and $\dual_{\nu, N}^{\gamma, \tau}$ be defined as in \eqref{eq:wassflow-continuous-expectation-max-dual} and \eqref{eq:wassflow-continuous-dual-empirical-mean-objective}, respectively, with $\gamma, \tau, N > 0$, and suppose Assumptions {\bf A1}-{\bf A6} hold. Let $(g_N, h_N)$ optimize $\dual_{\nu, N}$ and $(g_{\infty}, h_{\infty})$ optimize $\dual_{\nu}^{\gamma, \tau}$. Then for any $\delta > 0$, with probability at least $1 - \delta$ over the sample of size $N$,
\begin{equation}
\dual_{\nu}^{\gamma, \tau}(g_{\infty}, h_{\infty}) - \dual_{\nu}^{\gamma, \tau}(g_N, h_N) \leq \Ord\lr{\sqrt{\frac{(H K L)^2 \log(1 / \delta)}{N}}}.
\end{equation}
\end{proposition}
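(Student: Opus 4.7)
The plan is to apply the standard empirical-risk-minimization decomposition to reduce excess risk to a uniform-deviation bound, and then control that uniform deviation by a covering-number argument inside the RKHS $\Hilb$.

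First I would perform the ERM decomposition. Since $(g_N,h_N)$ maximizes $\dual_{\nu,N}^{\gamma,\tau}$ over $\mathcal{G}\times\mathcal{G}$ while $(g_\infty,h_\infty)$ maximizes $\dual_\nu^{\gamma,\tau}$ there, adding and subtracting $\dual_{\nu,N}^{\gamma,\tau}(g_\infty,h_\infty)$ and $\dual_{\nu,N}^{\gamma,\tau}(g_N,h_N)$ and using the optimality inequality $\dual_{\nu,N}^{\gamma,\tau}(g_\infty,h_\infty) - \dual_{\nu,N}^{\gamma,\tau}(g_N,h_N) \le 0$ yields
\begin{equation*}
\dual_\nu^{\gamma,\tau}(g_\infty,h_\infty) - \dual_\nu^{\gamma,\tau}(g_N,h_N) \;\leq\; 2 \sup_{(g,h)\in\mathcal{G}\times\mathcal{G}} \bigl| \dual_{\nu,N}^{\gamma,\tau}(g,h) - \dual_\nu^{\gamma,\tau}(g,h) \bigr|.
\end{equation*}
Thus it suffices to show that the supremum on the right is $\Ord(\sqrt{(HKL)^2\log(1/\delta)/N})$ with probability at least $1-\delta$.

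Next I would carry out a standard covering-number argument. Assumptions A1--A6 are set up to provide: compactness of $\Xspace\times\Xspace$, a bounded reproducing kernel with $\sup_{\xvec}\sqrt{\kernel(\xvec,\xvec)}\le H$, boundedness of $\mathcal{G}$ by an RKHS-norm radius $K$ (so that by the reproducing property $\|g\|_\infty\le HK$ for $g\in\mathcal{G}$), and $L$-Lipschitzness of the integrand $\dualintegrand_\nu^{\gamma,\tau}(\xvec,\yvec,\cdot,\cdot)$ on $\mathcal{G}\times\mathcal{G}$, as announced in the text preceding the proposition. For each $\epsilon>0$, compactness of $\mathcal{G}$ furnishes an $\epsilon$-net of cardinality $\mathcal{N}(\epsilon)$. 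I would apply Hoeffding's inequality at each of the $\mathcal{N}(\epsilon)^2$ product-net points, using boundedness of $\dualintegrand$ in terms of $H$ and $K$, union-bound over the net, and then pass from the net to all of $\mathcal{G}\times\mathcal{G}$ at an additional cost of $L\epsilon$. Balancing $\epsilon \asymp 1/\sqrt{N}$ yields
\begin{equation*}
\sup_{(g,h)\in\mathcal{G}\times\mathcal{G}} \bigl| \dual_{\nu,N}^{\gamma,\tau}(g,h) - \dual_\nu^{\gamma,\tau}(g,h) \bigr| \;\leq\; \Ord\!\Bigl( \sqrt{\tfrac{(HKL)^2 \log(1/\delta)}{N}} \Bigr),
\end{equation*}
which combined with the decomposition above closes the argument.

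The main obstacle is the bookkeeping that threads the constants $H$, $K$, $L$ cleanly through each step. Uniform boundedness of $\dualintegrand$ must account for the potentially steep growth of the conjugate factors $\bar f^\ast$ and $\bar R^\ast$, whose arguments involve $-g(\xvec)/\tau$ and $(g(\xvec)+h(\yvec)-d^2(\xvec,\yvec))/\gamma$; this is where the RKHS bound $\|g\|_\infty\le HK$ is essential. Similarly, the $L$-Lipschitz constant of $\dualintegrand$ must absorb the importance-ratio factors $1/\mu_0(\xvec)$, $\nu(\yvec)/\nu_0(\yvec)$ and $1/(\mu_0(\xvec)\nu_0(\yvec))$, which the assumptions control by lower-bounding $\mu_0$ and $\nu_0$ on the compact domain. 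Once these scalings are pinned down, the remainder of the proof is a direct application of the Hoeffding-plus-union-bound template for empirical processes over an RKHS ball.
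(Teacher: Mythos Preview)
Your ERM decomposition is exactly what the paper does (in slightly different order), so that part is fine. Where you diverge is in how you obtain the uniform deviation bound: the paper does not use a covering-number argument at all. Instead it observes that the integrand $\dualintegrand_\nu^{\gamma,\tau}$ is a convex, $L$-Lipschitz function of the pair $(g(\xvec),h(\yvec))$, and that each of these is a bounded linear functional of $(g,h)\in\Hilb\times\Hilb$ with $\|\kernel(\xvec,\cdot)\|_{\Hilb}\le K$ and $\|g\|_{\Hilb}\le H$. It then invokes Theorem~1 of Shalev-Shwartz et al.\ (2009) as a black box, which gives the dimension-free rate $HKL\sqrt{\log(1/\delta)/N}$ directly via Rademacher complexity of linear classes in a Hilbert ball together with Lipschitz contraction.

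Your covering route, as written, has a genuine gap. Compactness of $\mathcal{G}$ (Assumption~A3) guarantees that $\mathcal{N}(\epsilon)$ is finite, but says nothing about how $\log\mathcal{N}(\epsilon)$ grows as $\epsilon\downarrow 0$. A single-scale Hoeffding-plus-union-bound step yields a deviation of order $\text{range}\cdot\sqrt{(\log\mathcal{N}(\epsilon)+\log(1/\delta))/N}+L\epsilon$; balancing $\epsilon\asymp 1/\sqrt{N}$ does \emph{not} eliminate the $\log\mathcal{N}(\epsilon)$ term, and for a general infinite-dimensional RKHS ball that term can dominate the $\log(1/\delta)$ contribution. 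To recover the paper's clean $(HKL)^2\log(1/\delta)/N$ scaling via entropy methods you would need either a Dudley chaining argument with explicit control on the entropy integral of the RKHS ball, or to fall back on the Rademacher route that the paper uses, which sidesteps covering entirely by exploiting the linear-in-$\Hilb$ structure of pointwise evaluation. (Minor point: you have swapped $H$ and $K$ relative to the paper's Assumptions~A3--A4; $H$ bounds $\|g\|_{\Hilb}$ and $K$ bounds $\sqrt{\kernel(\xvec,\xvec)}$.)
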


\subsection{Computational complexity}

Complexity of first order descent methods for the stochastic dual problem is dominated by evaluation of the functions $g$ and $h$ at each iteration, for each sample $(\xvec_i, \yvec_i)_{i=1}^N$. Each pointwise evaluation of $g$ at a point $\xvec$ (and analogously for $h$ at $\yvec$) requires evaluating the sum $\sum_{k=1}^p \phi_k(\xvec) \alpha_k$, with $\alpha_k$ being the coefficients parameterizing the function \footnote{In the case of a kernel parameterization, we have $p = N$ and $\phi_k(\xvec) = \kernel(\xvec, \xvec^{(k)})$.}. Hence straightforward serial evaluation of $g$ and $h$ at each iteration is $\Ord(N p)$, with $p$ the dimension of $\mathcal{G}$. These sums, however, are trivially parallelizable. Moreover, for certain kernels (notably Gaussian kernels), the serial complexity can be reduced to $\Ord(N)$, by applying a fast multipole method such as the fast Gauss transform \cite{Greengard:1991fast}.

\vspace{-0.08cm}
\section{Empirical performance}
\vspace{-0.01cm}

\subsection{Discussion}

We note that accuracy of the proposed method can vary significantly, depending on several factors, including the particular density being approximated. Even given an unlimited number of Monte Carlo samples, our method gives a biased approximation of the exact diffusion process. There are three sources of bias. First is the discrepancy between the exact Wasserstein gradient step and the exact diffusion process, which only vanishes when the timestep is taken to zero. The second is the regularization applied to the Wasserstein distance, which can move the solution away from the exact Wasserstein gradient step. And the third source is the space $\mathcal{G}$ within which we optimize the dual variables $g$ and $h$, which may not contain the true solution. All three present tradeoffs in accuracy vs. computational complexity of optimization, and represent design choices when applying the method.

\subsection{Performance in high dimensions: Ornstein-Uhlenbeck process}

\begin{figure}[t]
\centering
\begin{subfigure}[t]{0.49\textwidth}
  \centering
  \includegraphics[width=\textwidth]{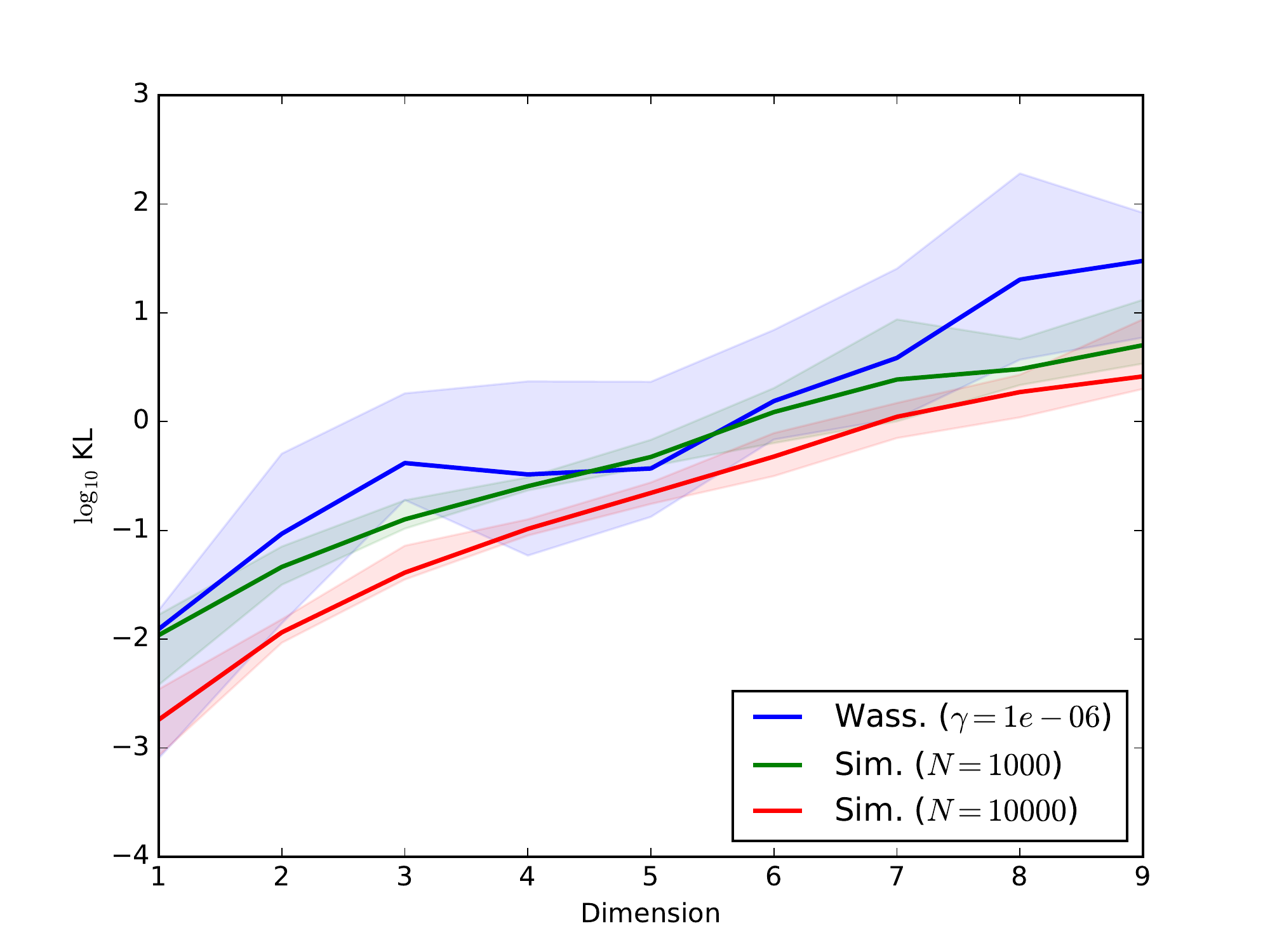}
  \caption{Accuracy with increasing dimension of the domain.}
  \label{fig:wassflow-empirical-exact-vs-dimension}
\end{subfigure}
\begin{subfigure}[t]{0.49\textwidth}
  \centering
  \includegraphics[width=\textwidth]{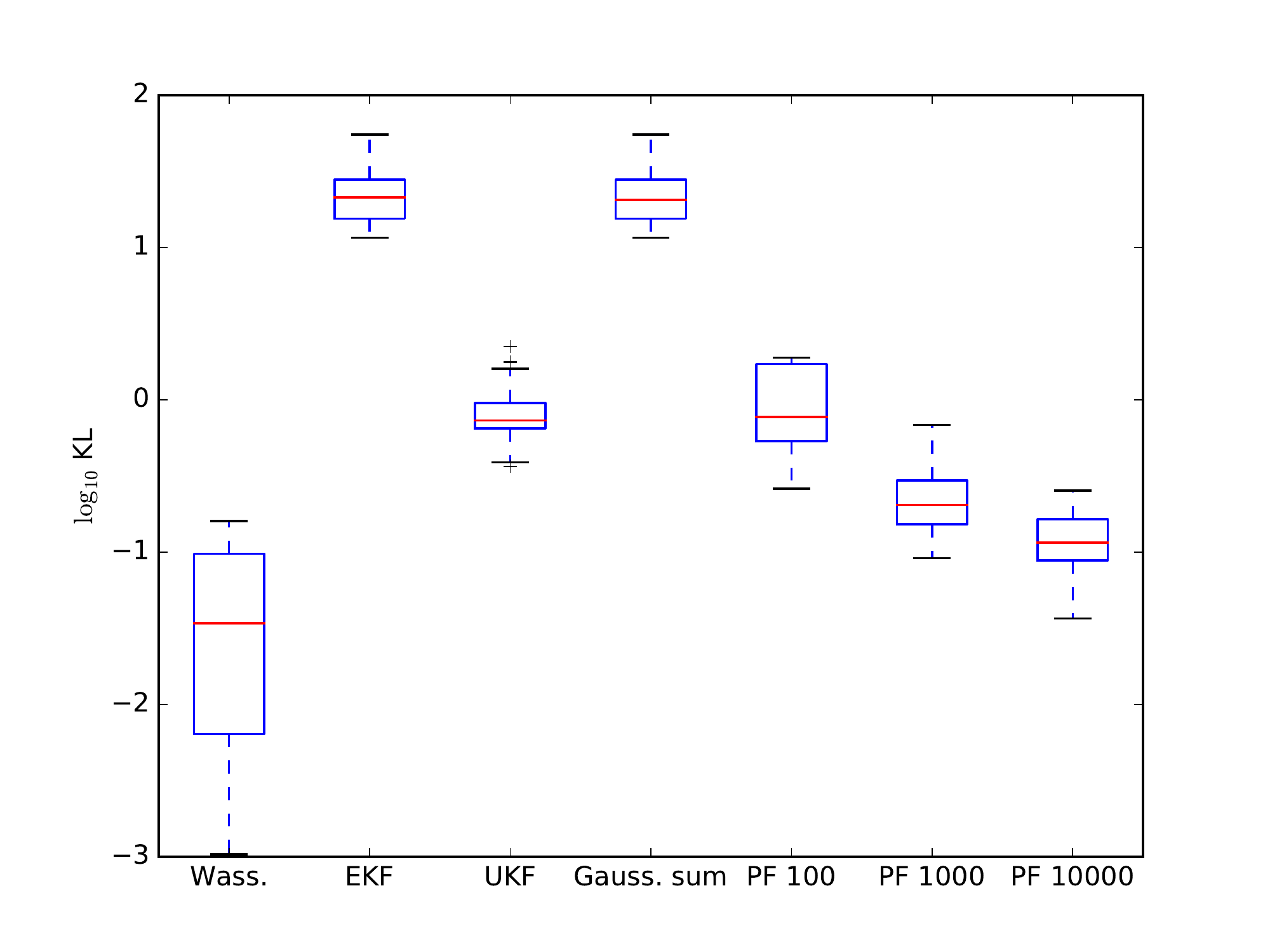}
  \caption{Posterior accuracy in nonlinear filtering.}
  \label{fig:wassflow-empirical-sine-posterior-kl}
\end{subfigure}
\caption{Empirical performance.}
\label{fig:wassflow-empirical}
\vspace{-0.5cm}
\end{figure}

We study the accuracy of our proposed inference method as the dimension of the domain increases. As we have sidestepped the need for discretization of the domain, our approximation is at least computable in arbitrary dimensions. The question is how the accuracy degrades with the dimension.

As a target, we use the only diffusion process of the form \eqref{eq:wassflow-bg-advection-diffusion-kolmogorov-forward} known to have a computable closed form solution in high dimensions. This is the Ornstein-Uhlenbeck process, which is a diffusion with a quadratic potential $w(\xvec) = (\xvec - \bvec)^{\trans} \Amat (\xvec - \bvec)$, parameterized by matrix $\Amat \in \reals^{d \times d}$ and offset $\bvec \in \reals^d$.
Given a deterministic initial condition, the exact solution at time $t$ is Gaussian with mean and covariance evolving in time towards their long-time stationary values. We fix $\beta = 1$ and generate random forcing matrices $\Amat$ and offsets $\bvec$.

As a baseline for comparison, we use the only other approach for high-dimensional inference that doesn't rely on a parametric assumption. This is a standard particle simulation method \footnote{We use the Euler-Maruyama method for simulation, with timestep $10^{-3}$.}, coupled with Gaussian kernel density estimation to obtain the full inferred distribution.

Figure \ref{fig:wassflow-empirical-exact-vs-dimension} shows the accuracy of the two methods as we increase the dimension of the underlying domain $\Xspace$ \footnote{We use an L2 regularizer and set $\gamma = 10^{-6}$. We use a third degree polynomial kernel for approximating $g$ and $h$ and approximate the objective using $2 \cdot 10^4$ sample points. We use a timestep of $\tau = 1/5$.}, for a timestep of $\Delta t = 1$. The figure shows median and $95\%$ interval over $20$ replicates. We see that our method scales with the dimension roughly equivalently to the simulation method, achieving accuracy (in symmetric KL divergence) comparable to simulation with $1000$ particles.

\subsection{Application: nonlinear filtering}

We demonstrate filtering of a nonlinear diffusion, which is observed at discrete times via a noisy measurement process. This is a discrete-time stochastic process $\yrv_k$, taking values at times $t_k$, which is related to the underlying diffusion $\xrv_t$ by 
$$ \yrv_k = \xrv_{t_k} + \vvec_k $$
with $\vvec_k \sim \normal(0, \sigma_{\yrv}^2)$ noise. Given a sequence of such measurements $\yvec_{0:K}$ up to time $t_K$, the {\bf continuous-discrete filtering} problem is that of determining the corresponding distribution over the underlying state, $\Pr(\xrv_t = \xvec_t|\yvec_{0:K})$, at some future time $t \geq t_K$. For future times $t > t_K$, this is the {\bf marginal prior} or {\bf predictive distribution} over states, defined by the dynamics of the diffusion process, satisfying the forward PDE \eqref{eq:wassflow-bg-diffusive-PDE} with initial density $\Pr(\xrv_{t_K} = \xvec_{t_K}|\yvec_{0:K})$.
At the measurement time $t = t_K$, this is the {\bf marginal posterior}, conditional upon the measurements, and is defined by a recursive update equation
\begin{equation*}
\label{eq:wassflow-bg-filtering-recursive-update}
\Pr(\xrv_{t_K} = \xvec_{t_K}|\yvec_{0:K}) = \frac{\Pr(\yrv_K = \yvec_K|\xrv_{t_K} = \xvec_{t_K}) \Pr(\xrv_{t_K} = \xvec_{t_K}|\yvec_{0:K-1})}{\Pr(\yrv_K = \yvec_K)}.
\end{equation*}
The term $\Pr(\xrv_{t_K} = \xvec_{t_K}|\yvec_{0:K-1})$ is the predictive distribution given the measurements up to time $t_{K-1}$. We assume an initial distribution $\Pr(\xrv_{t_0} = \xvec_{t_0})$ is given.

We assume the underlying state evolves according to a diffusion in the potential $w(x) = \frac{1}{\pi} \sin(2 \pi x) + \frac{1}{4} x^2$, having unit diffusion coefficient $\beta = 1$. This is a highly nonlinear process and yields multimodal posteriors, which will present a challenge for most existing filtering methods. Measurements are made with noise $\sigma = 1$. We apply the Wasserstein gradient flow to approximate the predictive density of the diffusion, which at measurement times is multiplied pointwise with the likelihood $\Pr(\yvec_k|\xvec_{t_k})$ to obtain an unnormalized posterior density \footnote{We use an L2 regularizer and set $\gamma = 10^{-6}$. We use a Gaussian kernel with bandwidth $0.1$ and approximate the objective with $10^4$ samples. We use a timestep of $\tau = 1 / 4$.}.

We use five methods as baselines for comparison. The first computes the exact predictive density by numerically integrating the Fokker-Planck equation \eqref{eq:wassflow-bg-advection-diffusion-kolmogorov-forward} on a fine grid -- this allows us to compare computed posteriors to the exact, true posterior. The second and third are the Extended and Unscented Kalman filters, which maintain Gaussian approximations to the posterior. The fourth method is a Gaussian sum filter \cite{Alspach:1972to}, which approximates the posterior by a mixture of Gaussians. And the fifth baseline is a bootstrap particle filter, which samples particles according to the transition density $\Pr(\xvec_{t_k}|\xvec_{t_{k-1}})$, by numerical forward simulation of the SDE \eqref{eq:wassflow-bg-advection-diffusion-sde} \footnote{For foward simulation, we use an Euler's method with timestep $10^{-3}$.}.

We simulate $20$ observations at a time interval of $\Delta t = 1$, and compute the posterior density by each of the methods. Figure \ref{fig:wassflow-empirical-sine-posterior-kl} shows quantitatively the fidelity of the estimated posterior to that computed by exact numerical integration, repeating the filtering experiment $100$ times. Appendix \ref{sec:appendix-filtering} shows examples of the estimated posterior density of the diffusion. The Wasserstein gradient flow consistently outperforms the baselines, both qualitatively and quantitatively, achieving smaller symmetric KL divergence to the true posterior. Whereas the multimodality of the posterior presents a challenge for the baseline methods, the Wasserstein gradient flow captures it almost exactly.

\section*{Acknowledgments}

The authors acknowledge the generous support of the Shell/MIT Energy Initiative, and thank Justin Solomon for some very helpful discussions.

\bibliographystyle{unsrt}
{\small
\bibliography{jko}
}

\newpage
\appendix

\section{Duality}

\begin{repproposition}{prop:wassflow-inference-fenchel-dual}[Strong duality]
Let $\nu \in \Pspace(\Xspace)$ and $f : \Pspace(\Xspace) \rightarrow [0, +\infty)$ a convex, lower semicontinuous and proper functional. Define $\primal_{\nu}^{\gamma, \tau}$ as in \eqref{eq:wassflow-inference-primal-objective} and $\dual_{\nu}^{\gamma, \tau}$ as in \eqref{eq:wassflow-inference-dual-objective}. Assume $\gamma > 0$. Then
\begin{equation}
\min_{\mu \in \Pspace(\Xspace)} \primal_{\nu}^{\gamma, \tau}(\mu) = \max_{g \in L^2(\Xspace), h \in L^2(\Xspace)} \dual_{\nu}^{\gamma, \tau}(g, h).
\end{equation}
Suppose $f$ is strictly convex and let $g_{\ast}, h_{\ast}$ maximize $\dual_{\nu}^{\gamma, \tau}$. Then
\begin{equation}
\mu_{\ast} = \nabla f^{\ast}(-\frac{1}{\tau} g_{\ast})
\end{equation}
minimizes $\primal_{\nu}^{\gamma, \tau}$.
\end{repproposition}

\begin{proof}
For $\Wass_{\gamma}(\cdot, \nu)$ and $f$ both convex, lower semicontinuous and proper, Fenchel duality has that
\begin{equation}
\label{eq:wassflow-smoothed-dual-fenchel-duality}
\min_{\mu \in L^2(\Xspace)} \Wass_{\gamma}(\mu, \nu) + \tau f(\mu) = \max_{g \in L^2(\Xspace)} -\Wass_{\gamma}(\cdot, \nu)^{\ast}(g) - \tau f^{\ast}(-\frac{1}{\tau} g),
\end{equation}
with $\Wass_{\gamma}(\cdot, \nu)^{\ast}$ and $f^{\ast}$ the convex conjugates,
\begin{align}
\label{eq:wassflow-smoothed-dual-fenchel-convex-conjugate-wass}
\Wass_{\gamma}(\cdot, \nu)^{\ast}(g) &= \max_{\mu \in \Pspace(\Xspace)} \langle d\mu, g \rangle_{L^2(\Xspace)} - \Wass_{\gamma}(\mu, \nu), \\
(\tau f)^{\ast}(-g) = \tau f^{\ast}(-\frac{1}{\tau} g) &= \max_{\mu \in \Pspace(\Xspace)} -\langle d\mu, g \rangle_{L^2(\Xspace)} - \tau f(\mu).
\end{align}

Rewrite $\Wass_{\gamma}(\cdot, \nu)^{\ast}$.
\begin{align*}
\Wass_{\gamma}(\cdot, \nu)^{\ast}(g) = -\inf_{\pi \in \Pspace(\Xspace \times \Xspace), \Proj_2 \pi = \nu} \langle c - g, d\pi \rangle_{L^2(\Xspace \times \Xspace)} + \gamma R(\pi).
\end{align*}

The Lagrangian dual for $\Wass_{\gamma}(\cdot, \nu)^{\ast}$ is
\begin{align*}
\Wass_{\gamma}(\cdot, \nu)^{\ast}(g) &= -\sup_{h \in L^2(\Xspace), \varepsilon \in L^2(\Xspace \times \Xspace)} \inf_{\pi \in \mathcal{M}_+(\Xspace \times \Xspace)} \langle c - g, d\pi \rangle_{L^2(\Xspace \times \Xspace)} + \gamma R(\pi) \\
 &\quad\quad - \langle \varepsilon, d\pi \rangle_{L^2(\Xspace \times \Xspace)} + \langle h, d(\nu - \Proj_2 \pi) \rangle_{L^2(\Xspace)}.
\end{align*}
From the KKT conditions, we get
\begin{align*}
& c - g + \gamma \nabla R(\pi) - \varepsilon - h = 0 \\
& \varepsilon, \pi \geq 0 \\
& \varepsilon \pi = 0.
\end{align*}
The first condition implies
\begin{align*}
& \nabla R(\pi) = \frac{1}{\gamma} \lr{g + h - c + \varepsilon} \\
\Rightarrow & d\pi = \nabla R^{\ast}\lr{\frac{1}{\gamma} \lr{g + h - c + \varepsilon}}, \\
\end{align*}
because we assumed $R$ is Legendre, so its gradient map is a bijection between $\interior \dom R$ and $\interior \dom R^{\ast}$ having inverse $\nabla R^{\ast}$.

Suppose $\nabla R^{\ast}\lr{\frac{1}{\gamma} \lr{g + h - c}}(\xvec, \yvec) < 0$. As $R$ is separable, we have $\nabla R^{\ast}(\xi)(\xvec, \yvec) = \nabla \bar{R}^{\ast}(d\xi(\xvec, \yvec))$, for $\bar{R}^{\ast}$ the pointwise component function for $R^{\ast} = \int_{\Xspace \times \Xspace} \bar{R}^{\ast}(\xvec, \yvec)$. And the gradient map is monotonic, so there exists positive $\Delta \in \reals_{++}$ such that
\begin{align*}
\nabla \bar{R}^{\ast}\lr{\frac{1}{\gamma} \lr{g(\xvec) + h(\yvec) - c(\xvec, \yvec) + \Delta}} = 0.
\end{align*}
Choosing $\varepsilon(\xvec, \yvec) = \Delta$ then yields $d\pi(\xvec, \yvec) = \nabla R^{\ast}\lr{\frac{1}{\gamma} \lr{g + h - c + \varepsilon}}(\xvec, \yvec) = 0$, and $\varepsilon$ defined this way is feasible ($\varepsilon(\xvec, \yvec)$ and $d\pi(\xvec, \yvec)$ are nonnegative and satisfy complementary slackness). Moreover, any other choice of $\varepsilon$ yields either $d\pi(\xvec, \yvec) < 0$, violating nonnegativity, or $d\pi(\xvec, \yvec) > 0$,  violating complementary slackness, as $\nabla \bar{R}^{\ast}$ is injective. Hence, $\varepsilon(\xvec, \yvec)$ is necessarily set to $\Delta$ and we have that $d\pi(\xvec, \yvec) = 0$. Clearly $d\pi(\xvec, \yvec) > 0$ implies $\varepsilon(\xvec, \yvec) = 0$, so we have that
\begin{equation}
d\pi = \lr{\nabla R^{\ast}\lr{\frac{1}{\gamma}(g + h - c)}}_+,
\end{equation}
with $(u)_+(\xvec, \yvec) = \max\left\{u(\xvec, \yvec), 0\right\}$ for any $u : \Xspace \times \Xspace \rightarrow \reals$.

Equivalently, we can write
\begin{align*}
\varepsilon(\xvec, \yvec) = \bigbrace{0 & \frac{1}{\gamma}(g(\xvec) + h(\yvec) - c(\xvec, \yvec)) > \nabla \bar{R}(0) \\ \gamma \nabla \bar{R}(0) - (g(\xvec) + h(\yvec) - c(\xvec, \yvec)) & \frac{1}{\gamma}(g(\xvec) + h(\yvec) - c(\xvec, \yvec)) \leq \nabla \bar{R}(0)}.
\end{align*}
Hence, optimal joint measure $\pi$ equivalently satisfies
\begin{equation}
d\pi = \nabla R^{\ast}\lr{\max \left\{\frac{1}{\gamma}(g + h - c), \nabla R(0)\right\}}.
\end{equation}

By definition of the convex conjugate,
\begin{align*}
R\lr{\nabla R^{\ast}(\xi)} = \langle \xi, \nabla R^{\ast}(\xi) \rangle_{L^2(\Xspace \times \Xspace)} - R^{\ast}(\xi),
\end{align*}
so plugging optimal $\pi$ into the Lagrangian dual for $\Wass_{\gamma}(\cdot, \nu)^{\ast}$, we get
\begin{equation}
\Wass_{\gamma}(\cdot, \nu)^{\ast}(g) = - \sup_{h \in L^2(\Xspace)} \langle h, \nu \rangle - \gamma R^{\ast}\lr{\max \left\{\frac{1}{\gamma}(g + h - c), \nabla R(0)\right\}}.
\end{equation}
From \eqref{eq:wassflow-smoothed-dual-fenchel-duality}, then we get the Fenchel dual
\begin{equation}
\dual_{\nu}^{\gamma, \tau}(g, h) = -\tau f^{\ast}\lr{-\frac{1}{\tau} g} + \langle h, \nu \rangle - \gamma R^{\ast}\lr{\max\left\{\frac{1}{\gamma}(g + h - c), \nabla R(0)\right\}}.
\end{equation}

Suppose $g_{\ast}, h_{\ast} \in L^2(\Xspace)$ optimize the dual objective $\dual_{\nu}^{\gamma, \tau}$. Then $\mu_{\ast}$ optimal for $\primal_{\nu}^{\gamma, \tau}$ satisfies
$$ \mu_{\ast} \in \partial (\tau f)^{\ast}(-g_{\ast}). $$
When $f$ is strictly convex, this is $\mu_{\ast} = \nabla (\tau f)^{\ast}(-g_{\ast}) = \nabla f^{\ast}(-\frac{1}{\tau} g_{\ast})$.
\end{proof}

\section{Representer theorem}

\begin{repproposition}{prop:wassflow-continuous-rkhs-representation}[Representation for general RKHS]
Let $\nu \in \Pspace(\Xspace)$ and $\gamma, \tau, N > 0$. Let $\{(\xvec^{(i)}, \yvec^{(i)}\}_{i=1}^N \subset \Xspace \times \Xspace$. Then there exist $g_{\ast}, h_{\ast} \in \Hilb$ maximizing \eqref{eq:wassflow-continuous-dual-empirical-mean-objective-regularized} such that
$$ (g_{\ast}, h_{\ast}) = \sum_{i=1}^N \lr{\alpha_g^{(i)} \kernel(\xvec^{(i)}, \cdot), \alpha_h^{(i)} \kernel(\yvec^{(i)}, \cdot)}, $$
for some sequences of scalar coefficients $\{\alpha_g^{(i)}\}_{i=1}^N$ and $\{\alpha_h^{(i)}\}_{i=1}^N$, with $\kernel : \Xspace \times \Xspace \rightarrow \reals$ the reproducing kernel for $\Hilb$.
\end{repproposition}

\begin{proof}
Let $\Hilb$ be the RKHS having kernel $\kernel$, and let $\langle \cdot, \cdot \rangle_{\Hilb} : \Hilb \times \Hilb \rightarrow \reals$ be the associated inner product. Let $g \in \Hilb$. From the reproducing property of $\Hilb$, we have that pointwise evaluation is a linear functional such that $g(\xvec) = \langle g, \kernel(\xvec, \cdot) \rangle_{\Hilb}$, for all $\xvec \in \Xspace$.

Let $\Hilb_N \subset \Hilb$ be the linear span of the functions $\kernel(\xvec^{(i)}, \cdot)$, and $\Hilb_N^{\perp}$ its orthogonal complement. For any $g \in \Hilb$, we can decompose it as $g = g^{\parallel} + g^{\perp}$, with $g^{\parallel} \in \Hilb_N$ and $g^{\perp} \in \Hilb_N^{\perp}$. Moreover, $\dual_{\nu, N}^{\gamma, \tau}(g, h) = \dual_{\nu, N}^{\gamma, \tau}(g^{\parallel}, h)$, as $\dual_{\nu, N}^{\gamma, \tau}$ depends on its first argument only via the evaluation functional at each point,
$$ g(\xvec^{(i)}) = \langle \kernel(\xvec^{(i)}, \cdot), g \rangle_{\Hilb} = \langle \kernel(\xvec^{(i)}, \cdot), g^{\parallel} \rangle_{\Hilb}. $$
Hence if $\dual_{\nu, N}^{\gamma, \tau}$ is maximized by $g_{\ast}$, it is also maximized by $g_{\ast}^{\parallel} \in \Hilb_N$. The same argument holds for $h_{\ast}$.
\end{proof}

\section{Consistency}
\label{sec:appendix-consistency}

We make the following assumptions.
\begin{itemize}
\item[{\bf A1}] $\Xspace \times \Xspace$ is compact.
\item[{\bf A2}] $\mu_0$ and $\nu_0$ are bounded away from zero: $\mu_0(\xvec) \geq U_0^{\min} > 0$, $\nu_0(\yvec) \geq V_0^{\min} > 0$, for all $\xvec, \yvec \in \Xspace$.
\item[{\bf A3}] $\mathcal{G}$ is compact and convex, with $\|g\|_{\Hilb} \leq H$ for all $g \in \mathcal{G}$.
\item[{\bf A4}] $\Hilb$ has reproducing kernel $\kernel$ that is bounded: $\max_{\xvec \in \Xspace} \sqrt{\kernel(\xvec, \xvec)} = K < \infty$. 
\item[{\bf A5}] $\bar{f}^{\ast}$ is convex and $L_{f^{\ast}}$-Lipschitz.
\item[{\bf A6}] $\dom \bar{R}^{\ast} = \reals$.
\end{itemize}

The assumptions guarantee that the Monte Carlo dual objective \eqref{eq:wassflow-continuous-dual-empirical-mean-objective} is $L$-Lipschitz.

\begin{proposition}[Lipschitz property for $\dualintegrand_{\nu}^{\gamma, \tau}$]
\label{prop:appendix-lipschitz}
Let $\dualintegrand_{\nu}^{\gamma, \tau}$ be defined as in \eqref{eq:wassflow-continuous-expectation-max-dual-integrand} and suppose Assumptions {\bf A1}-{\bf A6} hold. Let $U^{\max} = \max_{\xvec \in \Xspace, g \in \Hilb} \frac{\nabla f^{\ast}(-\frac{1}{\tau} g(\xvec))}{\mu_0(\xvec)}$ and $V^{\max} = \max_{\yvec \in \Xspace} \frac{\nu(\yvec)}{\nu_0(\yvec)}$. Then for all $g, g^{\prime}, h, h^{\prime} \in \Hilb$, $\dualintegrand_{\nu}^{\gamma, \tau}$ satisfies
\begin{align*}
& \left| \dualintegrand_{\nu}^{\gamma, \tau}(\xvec, \yvec, g, h) - \dualintegrand_{\nu}^{\gamma, \tau}(\xvec, \yvec, g^{\prime}, h^{\prime}) \right| \leq L \| (g(\xvec), h(\yvec)) - (g^{\prime}(\xvec), h^{\prime}(\yvec)) \|_1
\end{align*}
with constant $L$ defined by
$L = \max\left\{U^{\max}, V^{\max}, \frac{\nabla \bar{R}^{\ast}\lr{\frac{2}{\gamma} K H}}{U_0^{\min} V_0^{\min}}\right\}$.
\end{proposition}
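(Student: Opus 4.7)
The integrand $\dualintegrand_{\nu}^{\gamma,\tau}(\xvec,\yvec,g,h)$ depends on $g,h$ only through the pointwise evaluations $g(\xvec)$ and $h(\yvec)$, and it splits into three separable pieces: a term $T_1(g(\xvec)) = -\tau \bar{f}^{\ast}(-g(\xvec)/\tau)/\mu_0(\xvec)$ depending only on $g(\xvec)$, a term $T_2(h(\yvec)) = h(\yvec)\nu(\yvec)/\nu_0(\yvec)$ depending only on $h(\yvec)$, and a coupling term $T_3(g(\xvec),h(\yvec))$ containing $\bar{R}^{\ast}\circ\max$. The plan is to bound the Lipschitz constant of each piece separately in its arguments and then combine via the triangle inequality, which accounts for the use of the $\ell_1$-norm on the pair of scalars $(g(\xvec),h(\yvec))$.

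For $T_1$, the mean value theorem gives a Lipschitz constant equal to the supremum of $|T_1'(g(\xvec))| = |\bar{f}^{\ast\prime}(-g(\xvec)/\tau)/\mu_0(\xvec)|$, and since $\nabla f^{\ast}(z)(\xvec) = \bar{f}^{\ast\prime}(z(\xvec))$ by the integral representation of $f^{\ast}$, this supremum is exactly $U^{\max}$ by definition. For $T_2$, linearity yields Lipschitz constant $\nu(\yvec)/\nu_0(\yvec) \leq V^{\max}$ by assumption A2 and the definition of $V^{\max}$. These two bounds are essentially immediate.

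The main work is $T_3$. First, the reproducing property together with A3, A4 gives the a priori bound $|g(\xvec)| = |\langle g,\kernel(\xvec,\cdot)\rangle_{\Hilb}| \leq \|g\|_{\Hilb}\sqrt{\kernel(\xvec,\xvec)} \leq KH$, and similarly $|h(\yvec)| \leq KH$; combined with $d^2(\xvec,\yvec)\geq 0$, the argument $u = \frac{1}{\gamma}(g(\xvec)+h(\yvec)-d^2(\xvec,\yvec))$ of the max satisfies $u \leq 2KH/\gamma$. The outer map $u \mapsto \bar{R}^{\ast}(\max\{u,\nabla \bar R(0)\})$ has (almost-everywhere) derivative equal to $\nabla \bar R^{\ast}(u)$ when $u \geq \nabla\bar R(0)$ and $0$ otherwise; because $\bar R^{\ast}$ is convex (so $\nabla\bar R^{\ast}$ is monotonically nondecreasing) and is defined on all of $\reals$ by A6, this derivative is bounded above on the relevant range by $\nabla\bar R^{\ast}(2KH/\gamma)$. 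Applying the chain rule in $g(\xvec)$ (and symmetrically in $h(\yvec)$), the factor $1/\gamma$ from the argument cancels the outer $\gamma$, leaving the Lipschitz constant $\nabla\bar R^{\ast}(2KH/\gamma)/(\mu_0(\xvec)\nu_0(\yvec))$, which is bounded by $\nabla \bar R^{\ast}(2KH/\gamma)/(U_0^{\min}V_0^{\min})$ using A2.

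Finally, bounding $|\dualintegrand(\xvec,\yvec,g,h) - \dualintegrand(\xvec,\yvec,g',h')|$ by summing the three pointwise differences and grouping contributions on $|g(\xvec)-g'(\xvec)|$ and $|h(\yvec)-h'(\yvec)|$ produces a bound of the form $\max\{U^{\max},V^{\max},\nabla\bar R^{\ast}(2KH/\gamma)/(U_0^{\min}V_0^{\min})\} \cdot \|(g(\xvec),h(\yvec))-(g'(\xvec),h'(\yvec))\|_1$. The main obstacle is the third term: one needs to justify that the non-differentiability at $u=\nabla\bar R(0)$ does not spoil the Lipschitz property (handled since $\max\{\cdot,c\}$ is $1$-Lipschitz and $\bar R^{\ast}$ is convex, so the composition is Lipschitz with constant equal to the one-sided supremum of $\nabla\bar R^{\ast}$ on the relevant interval), and that the a priori bound on $|g(\xvec)|,|h(\yvec)|$ via the reproducing property suffices to keep the argument in a range where $\nabla\bar R^{\ast}$ is finite.
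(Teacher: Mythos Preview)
Your decomposition into $T_1,T_2,T_3$ and the individual Lipschitz bounds on each piece are correct, and the use of the reproducing property with Cauchy--Schwarz to obtain $|g(\xvec)|,|h(\yvec)|\le KH$ matches the paper exactly. The gap is in the last step: ``summing the three pointwise differences and grouping contributions'' via the triangle inequality does \emph{not} produce the constant $L=\max\{U^{\max},V^{\max},\nabla\bar R^{\ast}(2KH/\gamma)/(U_0^{\min}V_0^{\min})\}$ stated in the proposition. If you bound $|T_1(g)-T_1(g')|\le U^{\max}|g(\xvec)-g'(\xvec)|$ and $|T_3(g,h)-T_3(g',h')|\le C(|g(\xvec)-g'(\xvec)|+|h(\yvec)-h'(\yvec)|)$ with $C=\nabla\bar R^{\ast}(2KH/\gamma)/(U_0^{\min}V_0^{\min})$ separately and add, the coefficient you collect on $|g(\xvec)-g'(\xvec)|$ is $U^{\max}+C$, not $\max\{U^{\max},C\}$. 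Your argument therefore proves Lipschitz continuity with a larger constant than the one claimed.

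The paper avoids this by not splitting off $T_3$: it computes the full subderivative $\partial\dualintegrand_\nu^{\gamma,\tau}/\partial g(\xvec)$ directly and observes that it is a \emph{difference} of two nonnegative quantities, namely $\nabla\bar f^{\ast}(-g(\xvec)/\tau)/\mu_0(\xvec)\ge 0$ minus a term of the form $\nabla\bar R^{\ast}(\cdot)/(\mu_0(\xvec)\nu_0(\yvec))\ge 0$. For $a,b\ge 0$ one has $|a-b|\le\max\{a,b\}$, which immediately yields $|\partial\dualintegrand/\partial g(\xvec)|\le\max\{U^{\max},C\}=:L_g$, and symmetrically $|\partial\dualintegrand/\partial h(\yvec)|\le\max\{V^{\max},C\}=:L_h$. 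Taking $L=\max\{L_g,L_h\}$ then gives the $\ell_1$-Lipschitz bound as stated. The missing idea in your proposal is precisely this sign observation on the combined partial derivative; without it you cannot recover the $\max$ in the constant.
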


\begin{proof}
Note that $U^{\max}$ and $V^{\max}$ are finite by assumptions {\bf A2} and {\bf A5}. 

By {\bf A3-A4}, we have that $K = \min_{\xvec \in \Xspace} \sqrt{\kernel(\xvec, \xvec)} < \infty$, and $\mathcal{G} \times \mathcal{G}$ is bounded, such that $\|g\|_{\Hilb}, \|h\|_{\Hilb} \leq H$. Therefore $|g(\xvec)|, |h(\yvec)| \leq K H$, because by the reproducing property
\begin{align*}
|g(\xvec)| &= |\langle \kernel(\xvec, \cdot), g \rangle_{\Hilb}| \\
 &\leq \| \kernel(\xvec, \cdot) \|_{\Hilb} \| g \|_{\Hilb} \\
 &\leq K \|g\|_{\Hilb}, \\
 &\leq K H,
\end{align*} 
with the second step from Cauchy-Schwarz. The analogous result holds for $|h(\yvec)|$.

Let $q(g(\xvec), h(\yvec)) = \frac{1}{\gamma}(g(\xvec) + h(\yvec) - d^2(\xvec, \yvec))$. Then $\dualintegrand_{\nu}^{\gamma, \tau}$ has subderivatives
$$ \frac{\partial \dualintegrand_{\nu}^{\gamma, \tau}}{\partial g(\xvec)} = \frac{\nabla \bar{f}^{\ast}(-\frac{1}{\tau} g(\xvec))}{\mu_0(\xvec)} - \frac{\gamma}{\mu_0(\xvec) \nu_0(\yvec)} \bigbrace{\frac{1}{\gamma} \nabla \bar{R}^{\ast}\lr{q(g(\xvec), h(\yvec))} & q(g(\xvec), h(\yvec)) > \nabla \bar{R}(0) \\ {[0, \frac{1}{\gamma} \nabla \bar{R}^{\ast}(q(g(\xvec), h(\yvec)))]} & q(g(\xvec), h(\yvec)) = \nabla \bar{R}(0) \\ 0 & \otherwise} $$
in $g(\xvec)$ and 
$$ \frac{\partial \dualintegrand_{\nu}^{\gamma, \tau}}{\partial h(\yvec)} = \frac{\nu(\yvec)}{\nu_0(\yvec)} - \frac{\gamma}{\mu_0(\xvec) \nu_0(\yvec)} \bigbrace{\frac{1}{\gamma} \nabla \bar{R}^{\ast}\lr{q(g(\xvec), h(\yvec))} & q(g(\xvec), h(\yvec)) > \nabla \bar{R}(0) \\ {[0, \frac{1}{\gamma} \nabla \bar{R}^{\ast}(q(g(\xvec), h(\yvec)))]} & q(g(\xvec), h(\yvec)) = \nabla \bar{R}(0) \\ 0 & \otherwise} $$
in $h(\yvec)$. In both cases, the second term subtracts a nonnegative quantity while the first term is nonnegative. As $g(\xvec)$ and $h(\yvec)$ are bounded, $q$ is bounded from above, with
$$ q(g(\xvec), h(\yvec)) \leq \frac{2}{\gamma} KH. $$
$\nabla \bar{R}^{\ast}$ is monotonic, so it is bounded above by $\nabla \bar{R}^{\ast}\lr{\frac{2}{\gamma} K H}$. We therefore have
\begin{align*}
\left| \frac{\partial \dualintegrand_{\nu}^{\gamma, \tau}}{\partial g(\xvec)} \right| &\leq \max\left\{U^{\max}, \frac{\nabla \bar{R}^{\ast}\lr{\frac{2}{\gamma} K H}}{U_0^{\min} V_0^{\min}}\right\} \triangleq L_g \\
\left| \frac{\partial \dualintegrand_{\nu}^{\gamma, \tau}}{\partial h(\yvec)} \right| &\leq \max\left\{V^{\max}, \frac{\nabla \bar{R}^{\ast}\lr{\frac{2}{\gamma} K H}}{U_0^{\min} V_0^{\min}}\right\} \triangleq L_h.
\end{align*}
$\bar{R}^{\ast}$ is smooth on $\interior \dom \bar{R}^{\ast}$, and $\frac{2}{\gamma} K H \in \interior \dom \bar{R}^{\ast}$ by Assumption {\bf A6}, so $\nabla \bar{R}^{\ast}\lr{\frac{2}{\gamma} K H}$ is finite.

Letting $L = \max\{L_g, L_h\}$, this implies
$$ \left| \dualintegrand_{\nu}^{\gamma, \tau}(\xvec, \yvec, g, h) - \dualintegrand_{\nu}^{\gamma, \tau}(\xvec, \yvec, g^{\prime}, h^{\prime}) \right| \leq L \|(g(\xvec), h(\yvec)) - (g^{\prime}(\xvec), h^{\prime}(\yvec))\|_1, $$
for all $(g, h), (g^{\prime}, h^{\prime}) \in \mathcal{G} \times \mathcal{G}$ and $(\xvec, \yvec) \in \Xspace \times \Xspace$.
\end{proof}

Note that assumption {\bf A5} is satisfied by an advection-diffusion, so long as we assume $w$ is bounded below, as
$$ \max_{g \in \mathcal{G}, \xvec \in \Xspace} \left| \nabla f^{\ast}(-\frac{1}{\tau} g(\xvec)) \right| = \max_{g \in \mathcal{G}, \xvec \in \Xspace} \exp(-\frac{\beta}{\tau} g(\xvec) - w(\xvec)) \leq \exp(\frac{\beta}{\tau} K H - \beta W) $$
with $W = \min_{\xvec \in \Xspace} w(\xvec)$.

Under the assumptions, then, we get uniform convergence of the stochastic dual objective \eqref{eq:wassflow-continuous-dual-empirical-mean-objective} to its expectation \eqref{eq:wassflow-continuous-expectation-max-dual}, and this suffices to guarantee consistency.

\begin{repproposition}{prop:wassflow-consistency-unregularized}[Consistency of stochastic program]
Let $\dual_{\nu}^{\gamma, \tau}$ and $\dual_{\nu, N}^{\gamma, \tau}$ be defined as in \eqref{eq:wassflow-continuous-expectation-max-dual} and \eqref{eq:wassflow-continuous-dual-empirical-mean-objective}, respectively, with $\gamma, \tau, N > 0$, and suppose Assumptions {\bf A1}-{\bf A6} hold. Let $(g_N, h_N)$ optimize $\dual_{\nu, N}$ and $(g_{\infty}, h_{\infty})$ optimize $\dual_{\nu}^{\gamma, \tau}$. Then for any $\delta > 0$, with probability at least $1 - \delta$ over the sample of size $N$,
\begin{equation}
\dual_{\nu}^{\gamma, \tau}(g_{\infty}, h_{\infty}) - \dual_{\nu}^{\gamma, \tau}(g_N, h_N) \leq \Ord\lr{\sqrt{\frac{(H K L)^2 \log(1 / \delta)}{N}}}.
\end{equation}
\end{repproposition}

\begin{proof}
Note that $\dualintegrand_{\nu}^{\gamma, \tau}$ is jointly convex in $g(\xvec)$ and $h(\yvec)$, and these are in linear in $g$ and $h$, respectively. They can be written $g(\xvec) = \langle g, \kernel(\xvec, \cdot) \rangle_{\Hilb}$ with $\|\kernel(\xvec, \cdot)\|_{\Hilb} \leq K$ and $\|g\|_{\Hilb} \leq H$, and similarly for $h(\yvec)$, with the same bounds.

From \cite{ShalevShwartz:2009va} Thm. 1, then, we have uniform convergence of the empirical functional to its expectation, such that with probability $1 - \delta$
$$ \sup_{g, h \in \Hilb} \left|\dual_{\nu}^{\gamma, \tau}(g, h) - \dual_{\nu, N}^{\gamma, \tau}(g, h)\right| \leq \Ord\lr{\sqrt{\frac{(H K L)^2 \log(1 / \delta)}{N}}}, $$
for any $g, h \in \mathcal{G}$. This implies
\begin{align*}
& \dual_{\nu}^{\gamma, \tau}(g_{\infty}, h_{\infty}) - \dual_{\nu, N}^{\gamma, \tau}(g_{\infty}, h_{\infty}) + \dual_{\nu, N}^{\gamma, \tau}(g, h) - \dual_{\nu}^{\gamma, \tau}(g, h) \leq \Ord\lr{\sqrt{\frac{(H K L)^2 \log(1 / \delta)}{N}}} \\
\Rightarrow & \dual_{\nu}^{\gamma, \tau}(g_{\infty}, h_{\infty}) - \dual_{\nu}^{\gamma, \tau}(g, h) \leq \lr{\dual_{\nu, N}^{\gamma, \tau}(g_{\infty}, h_{\infty}) - \dual_{\nu, N}^{\gamma, \tau}(g, h)} + \Ord\lr{\sqrt{\frac{(H K L)^2 \log(1 / \delta)}{N}}} \\
 & \hspace{4cm} \leq \lr{\dual_{\nu, N}^{\gamma, \tau}(g_N, h_N) - \dual_{\nu, N}^{\gamma, \tau}(g, h)} + \Ord\lr{\sqrt{\frac{(H K L)^2 \log(1 / \delta)}{N}}} \\
\end{align*}
for any $g, h \in \mathcal{G}$. In particular, it's true for $g = g_N$ and $h = h_N$, which yields the statement.
\end{proof}

\section{Gradient flow approximates exact diffusion}

In Figure \ref{fig:wassflow-intro-example-exact}, the diffusion is an Ornstein-Uhlenbeck process with potential $w(x) = x^2$ and dispersion $\beta = 1$. The exact solution for the probability density is computed by Chang and Cooper's method on a grid of $400$ points on the interval $[-3, 3]$. The initial condition is a mixture of two Gaussians, centered at $\pm 1$, and each having standard deviation $1$. The Wasserstein gradient flow is computed using a Gaussian kernel supported at $40$ points chosen uniformly at random from $[-3, 3]$, with bandwidth $5 \cdot 10^{-2}$. The objective is approximated with $3 \cdot 10^4$ Monte Carlo samples. We use an entropic regularizer for the Wasserstein distance, with $\gamma = 10^{-2}$, and set timestep $\tau = 1 \cdot 10^{-2}$. The figure shows the density at times $t = 0.05, 0.2, 0.5$.

\section{Accuracy in high dimensions: Ornstein-Uhlenbeck process}

The process we are approximating in Figure \ref{fig:wassflow-empirical-exact-vs-dimension} is an Ornstein-Uhlenbeck process, having potential $w(\xvec) = (\xvec - \bvec) \Amat (\xvec - \bvec)$, with $\Amat \in \reals^{d \times d}$ and $\bvec \in \reals^d$ chosen randomly: $\Amat$ is a diagonal matrix with diagonal elements gamma distributed with shape $2$ and scale $0.5$, while $\bvec$ has independent normally distributed elements with standard deviation $0.5$. The process has dispersion $\beta = 1$, and initial density a delta function at $0$. Its density is computed exactly, in closed form, at time $\Delta t = 1$.

Our baseline is a particle simulation. For each particle, we forward simulate from time $t = 0$, using the Euler-Maruyama method with timestep $10^{-3}$. We use $N = 1000, 10000$ particles.

For the Wasserstein gradient flow, we approximate the objective using $2 \cdot 10^4$ Monte Carlo samples. We use a polynomial kernel of degree three, and an $L^2$ regularizer for the Wasserstein distance, with $\gamma = 10^{-6}$. We set timestep $\tau = 0.2$.

To evaluate the accuracy, we estimate the symmetric KL divergence between the estimated and exact densities by Monte Carlo, sampling $4 \cdot 10^4$ points randomly from the exact solution distribution at $t = 1$. For both estimation methods, we care about the accuracy up to normalization of the estimated distribution. Before computing the divergence, we choose the normalization constant that minimizes the sum of squared errors between the estimated and exact distribution.

We repeat the experiment $20$ times, for $20$ different random potentials, with Figure \ref{fig:wassflow-empirical-exact-vs-dimension} showing the median and $95\%$ interval for each method.

\section{Nonlinear filtering}
\label{sec:appendix-filtering}

\subsection{Problem setup and data generation}

Latent state trajectories in $\reals$ are generated from the SDE model
$$ d\xvec_t = -\lr{2 \cos(2 \pi \xvec_t) + \frac{1}{2} \xvec_t} dt + d\Wiener_t $$
which is an advection-diffusion with potential $w(\xvec) = \frac{1}{\pi} \sin(2\pi \xvec) + \frac{1}{4} \xvec_t^2$ and inverse dispersion coefficient $\beta = 1$. The latent system is observed at a time interval of $\Delta t = 1$, with additive Gaussian noise having standard devation $\sigma = 1$. State trajectories are generated by simulating the SDE using an Euler-Maruyama method with timestep $10^{-3}$, starting from $\xvec_0 = 0$.

\subsection{Baselines}

\noindent
{\bf Discretized numerical integration.} We construct a regularly-spaced grid of $1000$ points on the interval $[-4, 4]$, and use Chang and Cooper's method \cite{Chang:1970wj} to integrate the Fokker-Planck equation for the dynamics. We use a timestep of $10^{-3}$ for the integration.

When filtering, we obtain the posterior state distribution by first propagating forward the posterior at the previous observation time, via integrating the Fokker-Planck equation, then multiplying the resulting distribution pointwise by the observation likelihood and normalizing to sum to one.

\noindent
{\bf Extended Kalman filter.} The extended Kalman filter is implemented as described in \cite{Brown:1997aa}. We use Scipy's {\tt odeint} to integrate the ODE for the mean and covariance. The EKF is initialized with a Gaussian of whose mean is drawn from a normal distribution having mean $0$ and standard deviation $0.1$, and whose variance is $10^{-4}$.

\noindent
{\bf Unscented Kalman filter.} The unscented Kalman filter is implemented as described in \cite{Sarkka:2007td}. We use Scipy's {\tt odeint} to integrate the ODE for the mean and covariance. The UKF is initialized with a Gaussian of mean $0$ and variance $10^{-4}$. We use parameters $\alpha = \frac{1}{2}$, $\beta = 2$, $\kappa = 1$. ($\beta$ here refers to the parameter in \cite{Sarkka:2007td}, rather than the inverse dispersion coefficient in the main text.)

\noindent
{\bf Gaussian sum filter.} We implement a Gaussian sum filter as described in \cite{Alspach:1972to}. The filter is initialized with a mixture of eight Gaussians, having means drawn independently from a normal distribution with mean $0$ and standard deviation $1$, and each having variance $10^{-4}$.

\noindent
{\bf Bootstrap particle filter.} The bootstrap particle filter is implemented as described in \cite{Gordon:1993up}. For propagating particles forward in time, we simulate the system dynamics using an Euler-Maruyama method with timestep $10^{-3}$. We resample trajectories after each observation. To extrapolate the posterior to new points, we use Gaussian kernel density estimation on sampled support points with bandwidth chosen by Scott's rule.

\subsection{Example posterior evolution}

Figure \ref{fig:wassflow-empirical-filter-posterior-evolution-sine} shows an example of the evolution of the posterior distribution for consecutive timesteps. We simulate system trajectories and observations as described above and use the stochastic program for the Wasserstein gradient flow (Section \ref{sec:wassflow-inference-via-stochastic-programming}) to propagate the posterior at one observation time to the next. The resulting distribution is multiplied pointwise by the likelihood to obtain an unnormalized posterior. The sampling distribution for the stochastic program is uniform on the interval $[-4, 4]$. We use an L2 regularizer with $\gamma = 10^{-6}$, a Gaussian kernel with bandwidth $0.1$, and $10^4$ samples for approximating the stochastic program objective. We solve the stochastic program using L-BFGS (from {\tt scipy.optimize}), stopping when the norm of the gradient is less than $10^{-8}$.

We additionally overlay posterior distributions for the baseline algorithms. The distribution obtained from discretized numerical integration is shaded in blue. For visualization, all distributions are sampled on a grid and normalized to sum to one.

\subsection{Quantitative comparison of methods}

We simulate $100$ independent latent state trajectories and their observations. For each we obtain posterior distributions for the proposed Wasserstein gradient flow approximation and the baseline methods, as described above. We sample the resulting distributions on the same grid as was used for discretized numerical integration and normalize to sum to one. We compute the symmetric $\KL$-divergence between the exact distribution from discretized numerical integration and the approximate distribution from the given method.

\begin{figure}[t]
\centering
\begin{subfigure}[t]{0.4\textwidth}
  \centering
  \includegraphics[width=\textwidth]{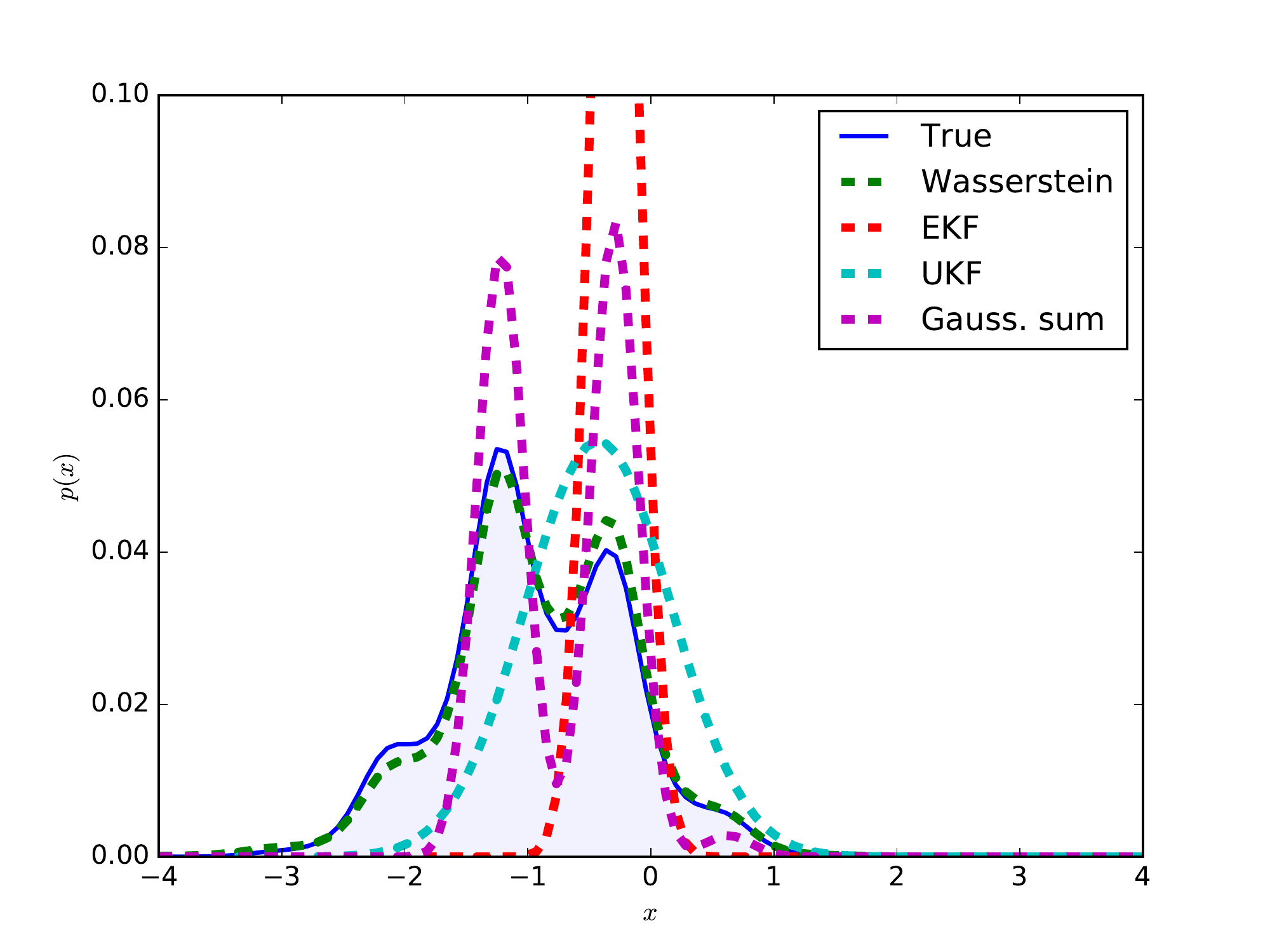}
  \caption{$t = 1$.}
  \label{fig:wassflow-empirical-filter-posterior-evolution-sine-1}
\end{subfigure}
\begin{subfigure}[t]{0.4\textwidth}
  \centering
  \includegraphics[width=\textwidth]{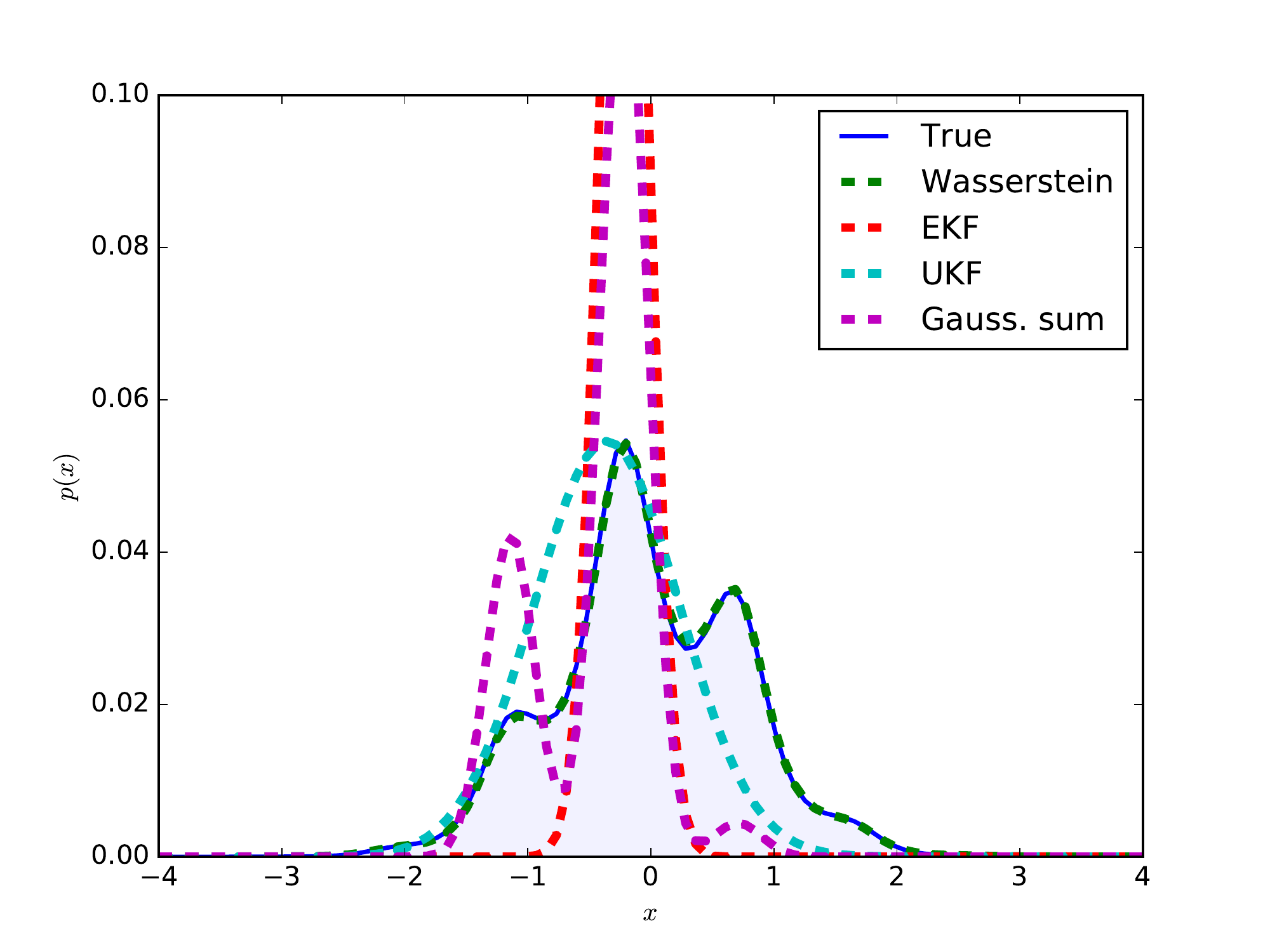}
  \caption{$t = 2$.}
  \label{fig:wassflow-empirical-filter-posterior-evolution-sine-2}
\end{subfigure}
\begin{subfigure}[t]{0.4\textwidth}
  \centering
  \includegraphics[width=\textwidth]{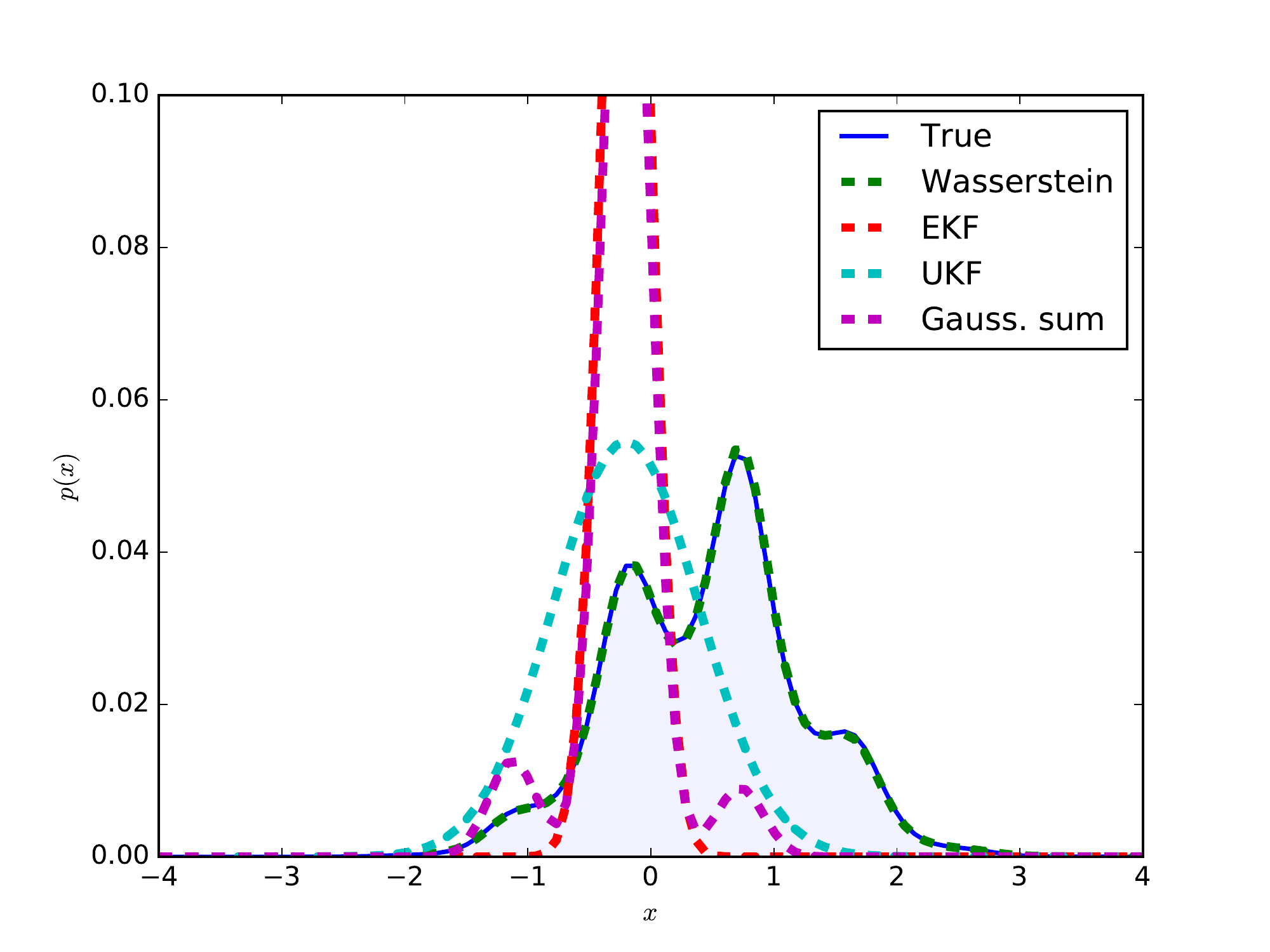}
  \caption{$t = 3$.}
  \label{fig:wassflow-empirical-filter-posterior-evolution-sine-3}
\end{subfigure}
\begin{subfigure}[t]{0.4\textwidth}
  \centering
  \includegraphics[width=\textwidth]{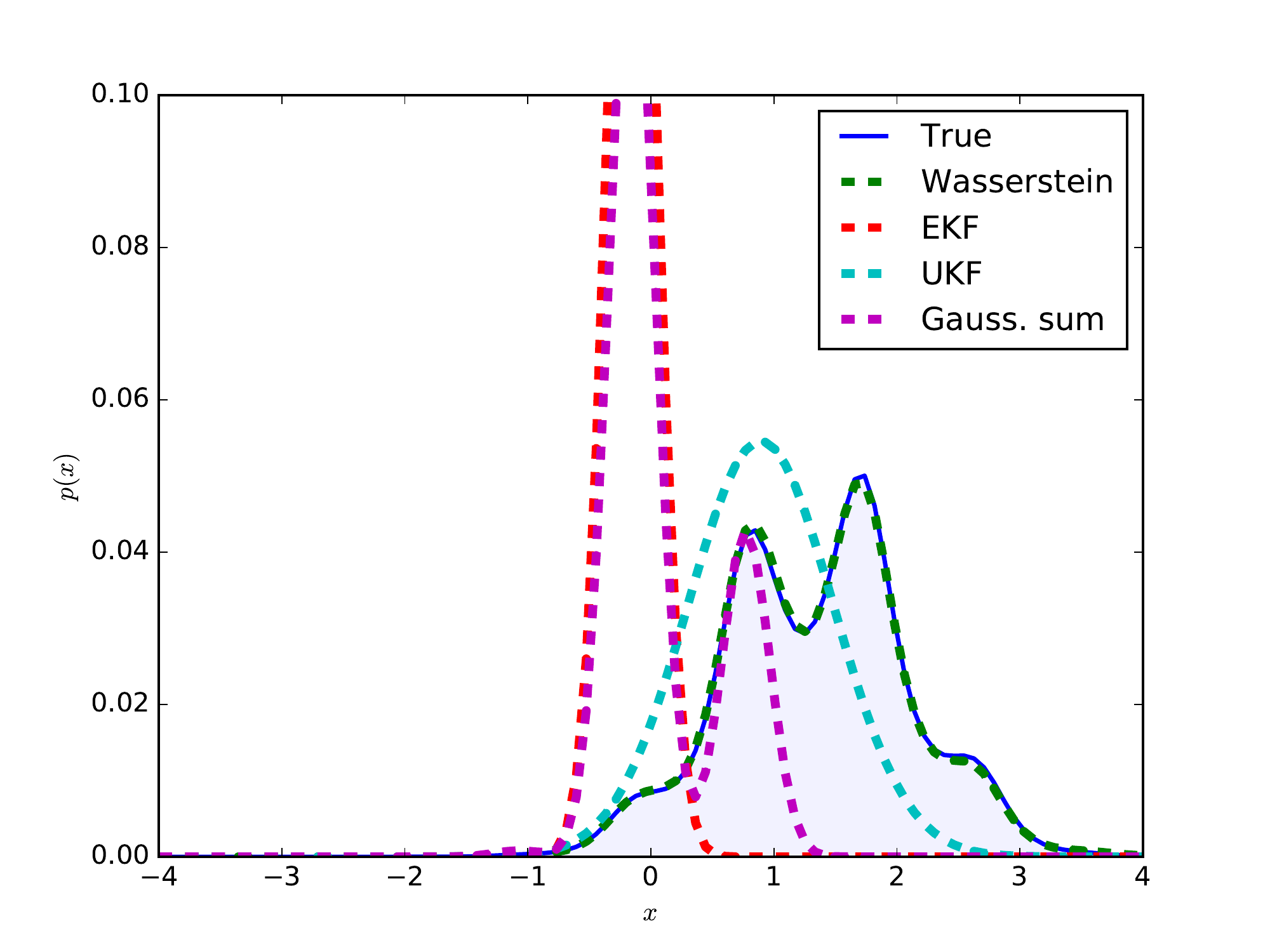}
  \caption{$t = 4$.}
  \label{fig:wassflow-empirical-filter-posterior-evolution-sine-4}
\end{subfigure}
\begin{subfigure}[t]{0.4\textwidth}
  \centering
  \includegraphics[width=\textwidth]{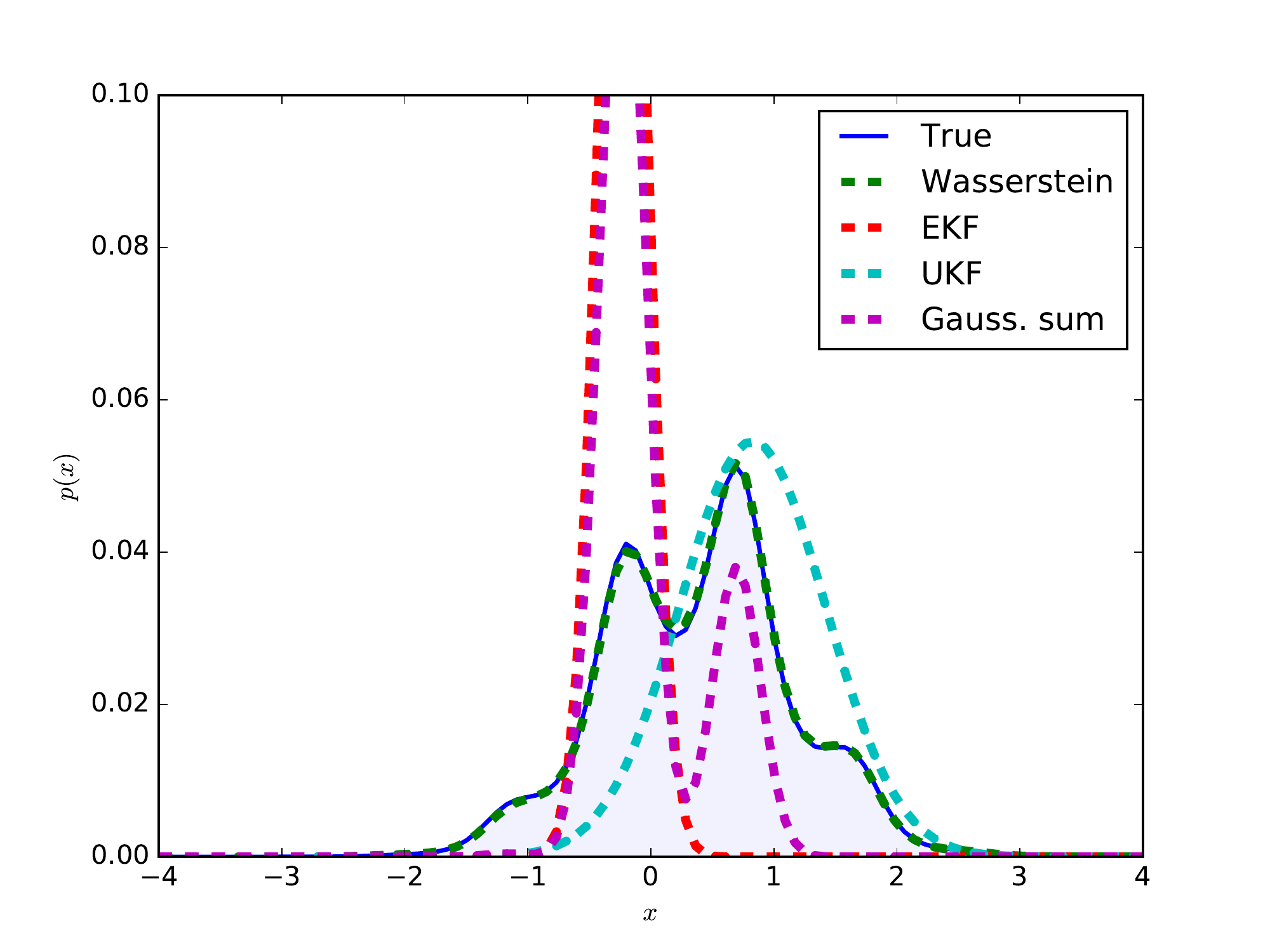}
  \caption{$t = 5$.}
  \label{fig:wassflow-empirical-filter-posterior-evolution-sine-5}
\end{subfigure}
\begin{subfigure}[t]{0.4\textwidth}
  \centering
  \includegraphics[width=\textwidth]{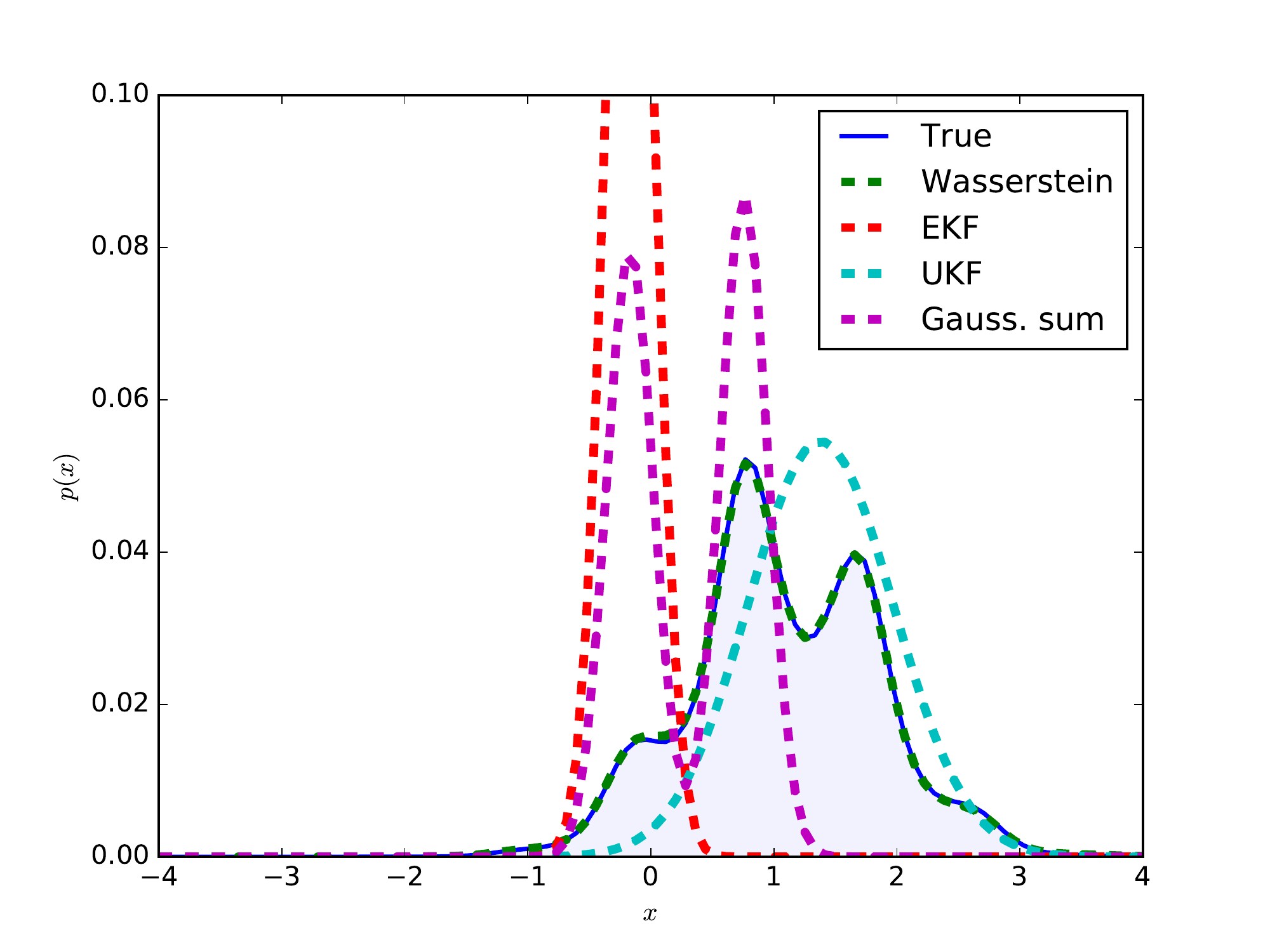}
  \caption{$t = 6$.}
  \label{fig:wassflow-empirical-filter-posterior-evolution-sine-6}
\end{subfigure}
\begin{subfigure}[t]{0.4\textwidth}
  \centering
  \includegraphics[width=\textwidth]{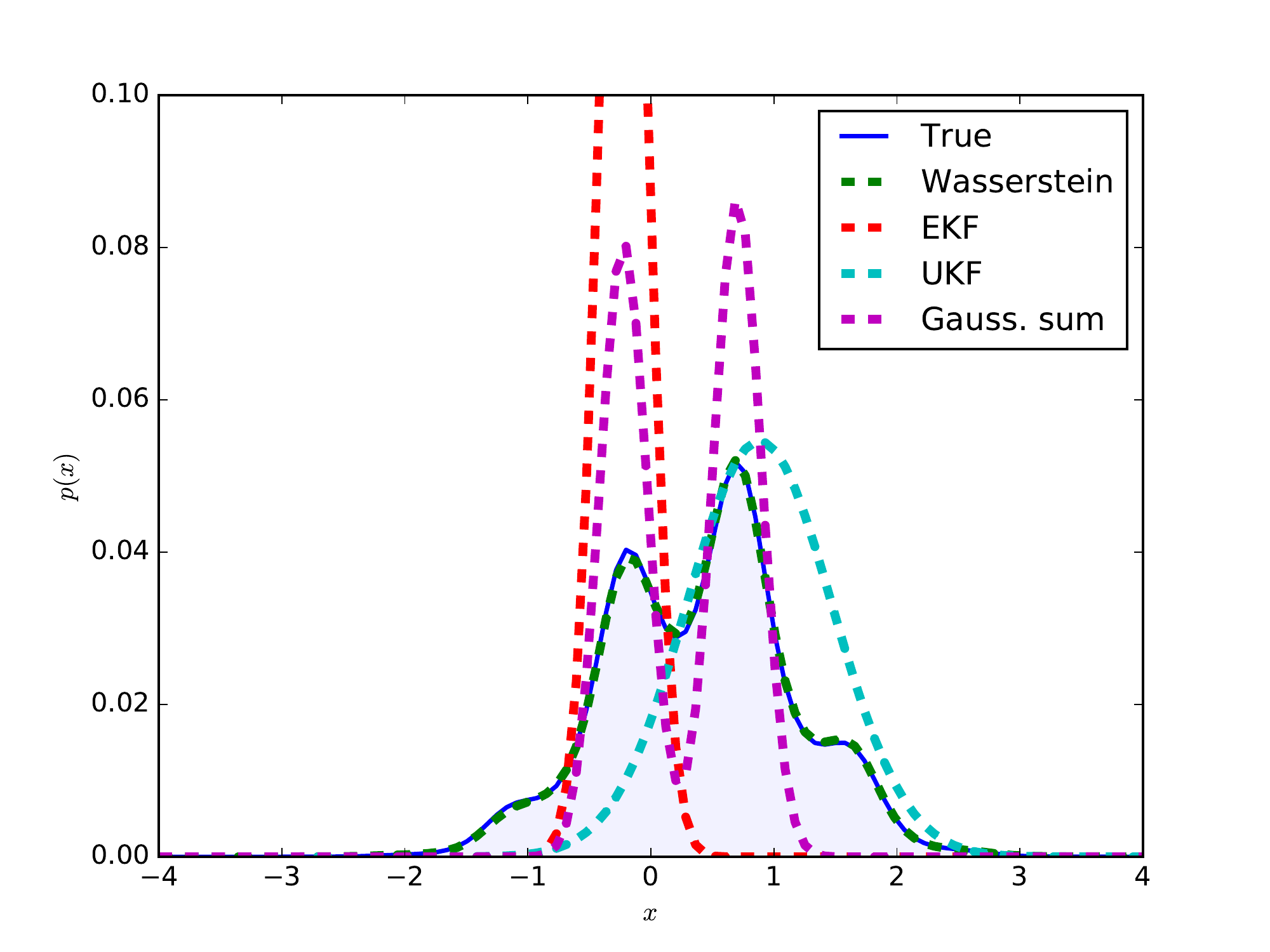}
  \caption{$t = 7$.}
  \label{fig:wassflow-empirical-filter-posterior-evolution-sine-7}
\end{subfigure}
\begin{subfigure}[t]{0.4\textwidth}
  \centering
  \includegraphics[width=\textwidth]{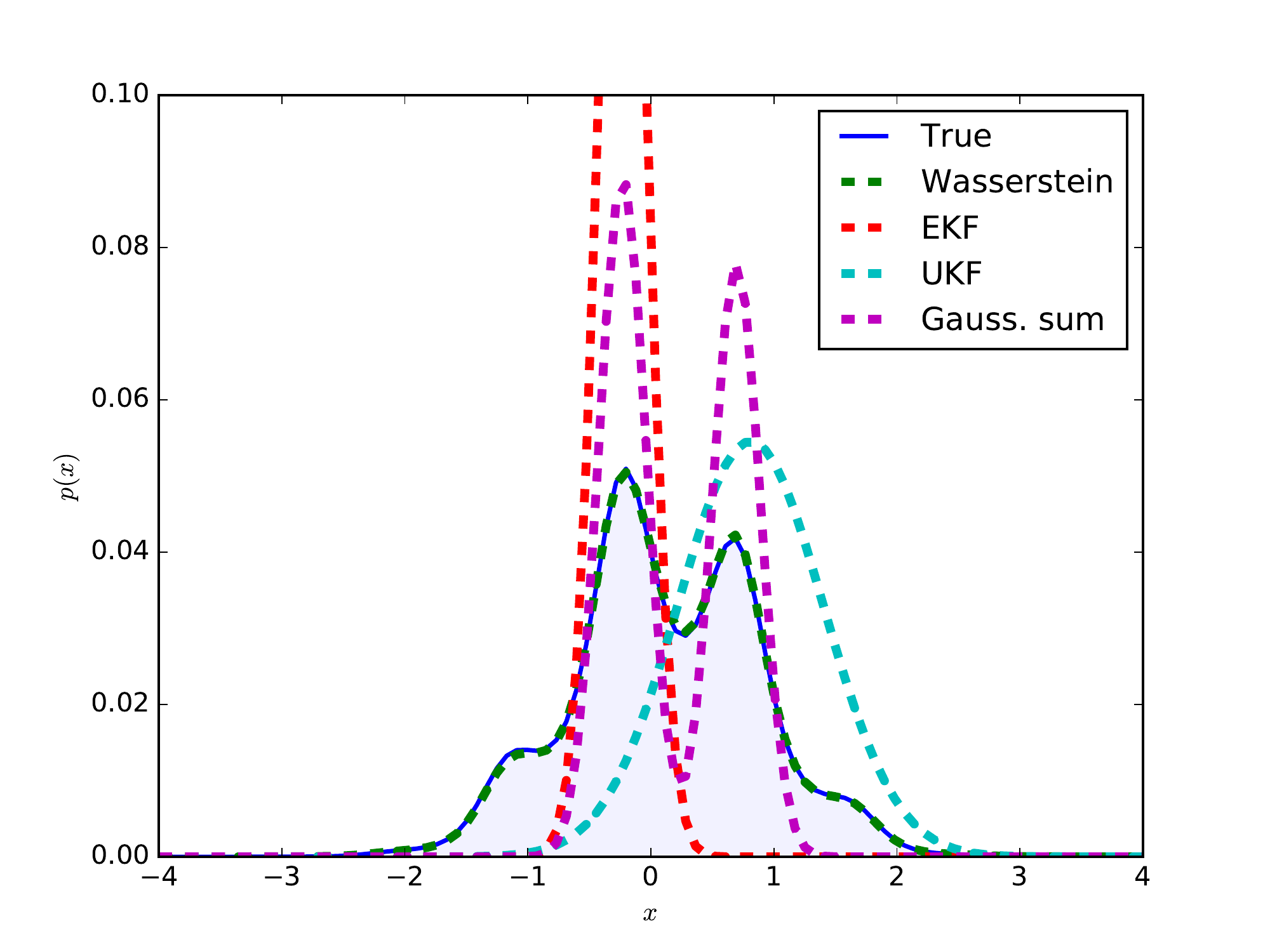}
  \caption{$t = 8$.}
  \label{fig:wassflow-empirical-filter-posterior-evolution-sine-8}
\end{subfigure}
\caption{Sine potential with noisy observations ($\sigma = 1$). Evolution of the posterior density, with estimates from the various methods overlaid. Shaded region is the exact solution.}
\label{fig:wassflow-empirical-filter-posterior-evolution-sine}
\end{figure}

\end{document}